\begin{document}

\title{Feature learning as alignment: a structural property of gradient descent in non-linear neural networks}

\author{\name Daniel Beaglehole$^{*,\dagger}$ \email dbeaglehole@ucsd.edu \\
      \addr UC San Diego
      \AND
      \name Ioannis Mitliagkas \email ioannism@google.com \\
      \addr Mila, Universit\'e de Montr\'eal \\
      \addr Google DeepMind
      \AND
      \name Atish Agarwala \email thetish@google.com\\
      \addr Google DeepMind}
      
\def\month{11}  
\def\year{2024} 
\def\openreview{\url{https://openreview.net/forum?id=JXCe2ZcUXr}} 

\maketitle
\def\thefootnote{*}\footnotetext{Corresponding author.}
\def\thefootnote{$\dagger$}\footnotetext{Work partially done as an intern at Google DeepMind.}

\begin{abstract}
    Understanding the mechanisms through which neural networks extract statistics from input-label pairs through feature learning is one of the most important unsolved problems in supervised learning. Prior works demonstrated that the gram matrices of the weights (the \emph{neural feature matrices}, NFM) and the  \textit{average gradient outer products} (AGOP) become correlated during training, in a statement known as the neural feature ansatz (NFA). Through the NFA, the authors introduce mapping with the AGOP as a general mechanism for neural feature learning. However, these works do not provide a theoretical explanation for this correlation or its origins. In this work, we further clarify the nature of this correlation, and explain its emergence. We show that this correlation is equivalent to alignment between the left singular structure of the weight matrices and the newly defined \emph{pre-activation tangent features} at each layer. We further establish that the alignment is driven by the interaction of weight changes induced by SGD with the pre-activation features, and analyze the resulting dynamics analytically at early times in terms of simple statistics of the inputs and labels. We prove the derivative alignment occurs almost surely in specific high dimensional settings. Finally, we introduce a simple optimization rule motivated by our analysis of the centered correlation which dramatically increases the NFA correlations at any given layer and improves the quality of features learned.
\end{abstract}

\section{Introduction}

Neural networks have emerged as the state-of-the-art machine learning methods for seemingly complex tasks, such as language generation \citep{GPT}, image classification \citep{AlexNet}, and visual rendering \citep{NeRF}. The precise reasons why neural networks generalize well have been the subject of intensive exploration, beginning with the observation that standard generalization bounds from statistical learning theory fall short of explaining their performance~\citep{zhang2021understanding}. 

A promising line of work emerged in the form of the neural tangent kernel, connecting neural networks to kernels in the wide limit~\citep{jacot2018neural,ChizatLazyTraining}.  
However, subsequent research showed that the success of neural networks relies critically on aspects of learning which are absent in kernel approximations~\citep{ghorbani2019limitations,allen-zhu2019what,yehudai2019power,li2020learning,refinetti2021classifying}.
Other work showed that low width suffices for gradient descent to achieve arbitrarily small test error \citep{ji2020knowledge}, further refuting the idea that extremely wide networks are necessary. 

Subsequently, the success of neural networks has been largely attributed to
\emph{feature learning} - the ability of neural networks to learn representations of data which are useful for
downstream tasks.
However, the specific mechanism through which features are learned is an important unsolved problem in deep learning theory. A number of works have studied the abilities of neural networks to learn features in structured settings~\citep{abbe2022merged,ba2022high-dimensional,EshaanFeatureLearning,barak2022hidden,damian2022neural,moniri2023theory, WillettAGOP}. Some of that work proves strict separation in terms of sample complexity between neural networks trained with stochastic gradient descent and kernels~\citep{mousavi2022neural}.

The work above studies simple structure, such as
learning from low-rank data or functions that are hierarchical compositions of simple elements.
Recent work makes a big step towards generalizing these assumptions by proposing the \emph{neural feature ansatz} (NFA)~\citep{RadhakrishnanNFA,BeagleholeConvNFA}, 
a general structure that emerges in the weights of trained neural networks. The NFA states that the gram matrix of the weights at a given layer (known as the \textit{neural feature matrix} or NFM) is aligned with the \textit{average gradient outer product} (AGOP) of the network with respect to the input to that layer.
In particular, the NFM and AGOP are highly correlated in all layers of trained neural networks of general architectures, including practical models such as VGG \citep{VGG}, vision transformers \citep{ViT}, and GPT-family models \citep{GPT}. 

A major missing element is an explanation for \emph{how and why} the AGOP and NFM become correlated through training with gradient descent.
In this paper, we precisely describe the emergence of this correlation. We establish that the NFA is equivalent to alignment between the left
singular structure of the weight matrices and the uncentered covariance of the \textit{pre-activation tangent kernel} (PTK) (Section~\ref{sec: alignment}) features. We then introduce the
\emph{centered neural feature correlation} (\CNFC{}) which isolates this alignment process.
We show empirically that the \CNFC{} is close to its maximum value of $1$ at early times, and fully captures the NFA at late times
for a variety of architectures (fully-connected, convolutional, and attention layers) over a diverse collection of datasets (Section~\ref{sec: centering}).
Our experiments suggest that the \CNFC{} drives the development of the NFA. Through this centering, we show that the dynamics of the
\CNFC{} can be understood analytically in terms of the statistics of the data and 
labels at early times (Section~\ref{sec: cnfc theory}). Using this decomposition, we
show that the NFA emerges as a structural property of gradient descent (analytical result in the commonly studied setting of uniform data on the sphere, under certain assumptions on the activation and target functions). In particular, the first non-zero derivatives of the centered NFM and AGOP will be asymptotically identical. We further characterize how the
NFA depends on the data distribution, and explore this analytically and experimentally.

Finally, motivated by our theory, we design an intervention to increase the influence of the \CNFC{} and make the NFA more robust:
\OptName{}, a layerwise gradient normalization scheme (Section~\ref{sec: learning speeds}). The effectiveness of the latter update rule suggests a path towards rational design of architectures and training procedures that maximize the NFA notion of feature learning by promoting alignment dynamics.

\section{Alignment between the weight matrices and the pre-activation tangents} 
\label{sec: alignment}

In this section, we decompose the AGOP into the weight matrices and the feature covariance of the pre-activation tangent kernel (PTK), and demonstrate that the NFA is equivalent to alignment between the weights and these PTK features. 
We include a glossary of terms for ease of reference in Appendix~\ref{app: glossary}.

\subsection{Preliminaries}

We consider fully-connected neural networks with a single output of depth $L \geq 1$, where $L$ is the number of hidden layers, written $f : \Real^d \goto \Real$. We write the input to layer $\ell \in \{0, \ldots, L\} $ as $x_\ell$, where $x_0 \equiv x$ is the original datapoint, and the pre-activation as $h_\ell(x)$. Then, 
\begin{align}
    h_\ell(x) = W^{(\ell)} x_\ell,\qquad x_{\ell+1} = \phi(h_\ell(x))~,\label{eq: neural net setup}
\end{align} 
where $\phi$ is an element-wise nonlinearity, $W^{(\ell)} \in \mathbb{R}^{k_{\ell+1} \times k_{\ell}}$ is a weight matrix, and $k_\ell$ is the hidden dimension at layer $\ell$. We restrict $k_{L+1}$ to be the number of output logits, and set $k_0 = d$, where $d$ is the input dimension of the data. Note that $f(x) = h_{L}(x)$ and we ignore $x_{L+1}$. We train $f$ by gradient descent on a loss function
$\Lo(\theta, X)$, where $X$ is an input dataset, and $\theta$ is the collection of weights.

We consider a supervised learning setup where we are provided $n$ input-label pairs $(x^{(1)}, y^{(1)}), \ldots, (x^{(n)}, y^{(n)}) \in \Real^d \times \Real$. We denote the network inputs (datapoints) $X \in \Real^{n \times d}$ and the labels $y \in \Real^{n \times 1}$. For a given network, the inputs to intermediate layers $\ell \in \{0,\ldots,L\}$ are written $X_\ell \in \Real^{n \times k_\ell}$, where $X_0 \equiv X$. We train a fully-connected neural network to learn the mapping from network inputs to labels by minimizing a 
standard loss function, such as mean-squared-error or cross-entropy, on the dataset. 

\begin{figure*}[h]
    \centering
    \includegraphics[scale=0.5]{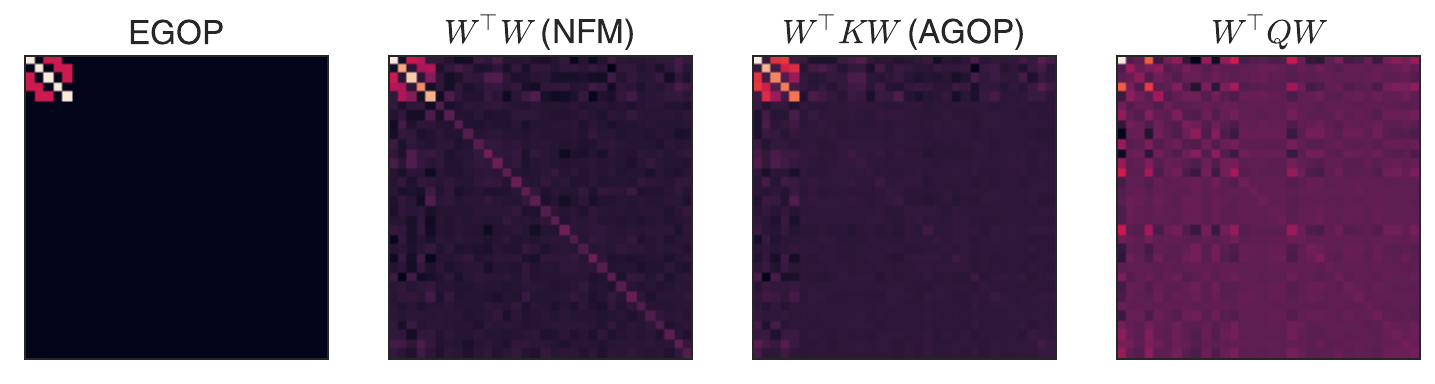}
    \caption{Various feature learning measures for target function
    $y(x) = \sum_{k=1}^{r} x_{k \mod r} \cdot x_{(k+1) \mod r}$ with
    $r=5$ and inputs drawn from standard normal. The EGOP $\Exp{x \sim \mu}{\dfdx{y}{x} \dfdx{y}{x}\tran}$ (first plot)
    captures the low-rank structure of the task. The NFM
    $\round{W\tran W}$ (second plot) and AGOP $\round{W\tran K W}$
    (third plot) of a fully-connected network are similar to each other
    and the EGOP. Replacing $K$ with a symmetric matrix $Q$ with
    the same spectrum but independent eigenvectors obscures the
    low rank structure (fourth plot), and reduces the correlation from
    $\corr{\NFM, \AGOP} = 0.93$ to $\corr{\NFM, W\tran Q W} = 0.53$.}
  \label{fig: Corrupted Ansatz}
\end{figure*}

One can define two objects associated with neural networks that capture learned structure. For a given layer $\ell$, the \textit{neural feature matrix} (NFM) $\NFM_\ell$ is the gram matrix of the columns of the weight matrix $W^{(\ell)}$, i.e. $\NFM_\ell \equiv (W^{(\ell)})\tran W^{(\ell)}$. $\NFM_\ell$ depends on the right singular vectors (and corresponding singular values)
of $W^{(\ell)}$. The second fundamental object we consider is the \textit{average gradient outer product} (AGOP) $\AGOP_\ell$, defined as $\AGOP_\ell \equiv \frac{1}{n} \sum_{\alpha=1}^n \dfdx{f(x^{(\alpha)})}{x_\ell} \dfdx{f(x^{(\alpha)})}{x_\ell}\tran$.

Since both these matrices have the same dimensions ($k_{\ell+1}\times k_{\ell+1}$),
we can consider their cosine similarity. We define the 
\textit{neural feature correlation}  (NFC) by:
\begin{align*}
    \corr{\NFM_{\ell}, \AGOP_{\ell}} \equiv \trace{\NFM_{\ell}\tran \AGOP_{\ell}} \cdot \trace{\NFM_{\ell}\tran \NFM_{\ell}}^{-1/2} \cdot \trace{\AGOP_{\ell}\tran \AGOP_{\ell}}^{-1/2}~. \tag{NFC}
\end{align*}
This takes on values in $[0,1]$ since $\NFM_{\ell}$ and $\AGOP_{\ell}$ are both
PSD.

Prior work has shown that in trained neural networks, the NFC will generally be close to $1$ to varying degree of approximation. This notion is formalized in the \textit{Neural Feature Ansatz} (NFA):

\begin{ansatz}[Neural Feature Ansatz \citep{RadhakrishnanNFA}]
\label{ansatz:nfa}
The Neural Feature Ansatz states that, for all layers $\ell \in [L]$ of a
fully-connected
neural network with $L$ hidden layers trained on input data $x^{(1)}, \ldots, x^{(n)}$, the NFC will satisfy $\corr{\AGOP_{\ell}, \NFM_{\ell}} \approx 1~.$
\end{ansatz}

The inputs to the covariance are the NFM and AGOP with respect to the input of layer $\ell$. Here, $\dfdx{f(x)}{x_\ell} \in \Real^{k_{\ell} \times 1}$ denotes the gradient of the function $f$ with respect to the intermediate representation $x_\ell$. For simplicity, we may concatenate these gradients  across the input dataset into a single matrix $\dfdx{f(X)}{x_\ell} \in \Real^{n \times k_{\ell}}$. Note we consider scalar outputs in this work, though the NFA relation is identical when there are $c \geq 1$ outputs, where in this case $\dfdx{f(x)}{x_\ell} \in \Real^{k_{\ell} \times c}$ is the input-output Jacobian of the model $f$.

We note that while the ansatz states exact proportionality, in practice, the NFM and AGOP are highly correlated with correlation less than $1$, where the final correlation depends on many aspects of training and architecture choice (discussed further in Section~\ref{sec: centering}).    

The relation between the NFM and the AGOP is significant, in part, because for a neural network that has learned enough information 
about the target function, the AGOP of this model with respect to the first-layer inputs will approximate the \emph{expected gradient outer product}
(EGOP) of the target function \citep{SmoothedEGOP}. In 
particular, the EGOP of the target function contains task-specific structure that is 
completely independent of the model used to estimate it. Where the 
labels are generated from a particular target function $y(x) : \Real^d \goto \Real$ on data sampled from a distribution $\mu$, the EGOP is defined as
\begin{align*}
    \EGOP(y,\mu) = \Exp{x \sim \mu}{\dfdx{y}{x} \dfdx{y}{x}\tran}~.
\end{align*}
If the neural feature ansatz (Ansatz~\ref{ansatz:nfa}) holds, the correlation of the EGOP and the AGOP at the end of training also implies high correlation between the NFM of the first layer and the EGOP, so that the NFM has encoded this task-specific structure.

To demonstrate the significance of high correlation between the NFM and the AGOP in successfully trained networks, we consider the following \textit{chain-monomial} low-rank task:
\begin{align}
    y(x) = \sum_{k=1}^{r} x_{k \mod r} \cdot x_{(k+1) \mod r}~,
    \label{eq:chain monomial}
\end{align}
where the data inputs are sampled from an isotropic Gaussian distribution $\mu = \mathcal{N}(0,I)$. In this case, the entries $\EGOP(y, \mu)$ will be $0$ for rows and columns outside of the $r \times r$ sub-matrix corresponding to $x_1, \ldots, x_r$ (Figure~\ref{fig: Corrupted Ansatz}), as $y$ does not vary with coordinates $x_{r+1},\ldots,x_d$. Within this sub-matrix, the diagonal entries will have value $2$, while the off-diagonal entries will be either $1$ or $0$. Therefore, $\EGOP(y,\mu)$ will be rank $r$, where $r$ is much less than the ambient dimension. 

We verify for this task that the AGOP of the trained model resembles the EGOP (first and third panels of Figure~\ref{fig: Corrupted Ansatz}). Here the NFA holds and the NFM (second panel)
resembles the AGOP and therefore the EGOP as well.
Therefore, the neural network has learned the model-independent and task-specific structure of the chain-monomial 
task in the right singular values and vectors of the first layer weight matrix, as these are determined by the NFM.
In fact, previous works have demonstrated that the NFM of the first layer of a well-trained neural network
is highly correlated with the AGOP of a fixed kernel method trained on the same dataset \citep{RadhakrishnanNFA}.

This insight has inspired iterative kernel methods which can
match the performance of fully-connected networks \citep{RadhakrishnanNFA, LinearRFM, FCM} and improve over fixed convolutional kernels \citep{BeagleholeConvNFA}. Additional prior works demonstrate the benefit of including the AGOP features to improve feature-less predictors \citep{SpokoinyEGOP, ConsistentEGOP, GradientWeights}. Additionally, because the NFM is correlated with the AGOP, the AGOP can be used to recover the features from feature-less methods, such as kernel machines. The AGOP has also been show to capture surprising phenomena of neural networks beyond low-rank feature learning including deep neural collapse \citep{BeagleholeNC}. 

\subsection{Alignment decomposition}

In order to understand Ansatz \ref{ansatz:nfa}, it is useful to decompose the AGOP. Doing so will allow us to show that the neural feature correlation (NFC) can be interpreted
as an alignment metric between weight matrices and the \textit{pre-activation tangent kernel} (PTK). The PTK $\Kn^{(\ell)}(x, z)$ is defined with respect to a layer $\ell$ of a neural network and two inputs $x,z$. The kernel evaluates to:
\begin{align*}
\Kn^{(\ell)}(x, z) \equiv & \dfdx{f(x)}{h_\ell}\cdot\dfdx{f(z)}{h_\ell}~. \tag{PTK}
\end{align*}
When the arguments $x$ and $z$ are omitted, $\Kn^{(\ell)} \in \Real^{n \times n}$ consists of the matrix of kernel evaluations on all pairs of datapoints. We may also consider the covariance $K^{(\ell)} \in \Real^{k_{\ell} \times k_{\ell}}$ of the features associated with the PTK, $\dfdx{f(X)}{h_\ell} \in \Real^{n \times k_{\ell}}$. In particular, we define the \emph{feature covariance} of the PTK as,
\begin{align*}
   \phantom{PTK feature covariance} K^{(\ell)} \equiv & \dfdx{f(X)}{h_\ell}\tran \dfdx{f(X)}{h_\ell}~. \tag{PTK feature covariance}
\end{align*}
Crucially, for any layer $\ell$, we can re-write the AGOP in terms of this feature covariance as,
\begin{align*}
\AGOP_\ell = (W^{(\ell)})\tran K^{(\ell)} W^{(\ell)}~.
\end{align*}
This gives us the following proposition:
\begin{proposition}[Alignment decomposition of NFC]
\label{prop: ptk, nfa alignment}
\begin{align*}
    \corr{\NFM_\ell, \AGOP_\ell} = \corr{(W^{(\ell)})\tran W^{(\ell)}, (W^{(\ell)})\tran K^{(\ell)} W^{(\ell)}}~.
\end{align*}
\end{proposition}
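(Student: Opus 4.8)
The plan is to reduce the claimed identity to the algebraic factorization of $\AGOP_\ell$ already exhibited in the text, so that both correlations become the correlation of the \emph{same} pair of matrices up to an overall positive constant. The only genuine content is (i) deriving that factorization from the definition of the AGOP via the chain rule, and (ii) observing that $\corr{\cdot,\cdot}$ is insensitive to such a constant.

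First I would establish the per-example chain rule. Since $h_\ell(x) = W^{(\ell)} x_\ell$ is linear in $x_\ell$ with constant Jacobian $W^{(\ell)}$, the gradient of $f$ with respect to the layer input factors as $\dfdx{f(x^{(\alpha)}_\ell)}{x_\ell} = (W^{(\ell)})\tran \dfdx{f(x^{(\alpha)})}{h_\ell}$. Substituting this into the definition $\AGOP_\ell = \frac{1}{n}\sum_{\alpha=1}^n \dfdx{f(x^{(\alpha)}_\ell)}{x_\ell}\dfdx{f(x^{(\alpha)}_\ell)}{x_\ell}\tran$ and pulling the example-independent factors $(W^{(\ell)})\tran$ and $W^{(\ell)}$ outside the sum leaves $\frac{1}{n}\sum_\alpha \dfdx{f(x^{(\alpha)})}{h_\ell}\dfdx{f(x^{(\alpha)})}{h_\ell}\tran$ in the middle, which is exactly $\frac{1}{n} K^{(\ell)}$ for $K^{(\ell)} = \dfdx{f(X)}{h_\ell}\tran \dfdx{f(X)}{h_\ell}$. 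Hence $\AGOP_\ell = \frac{1}{n}(W^{(\ell)})\tran K^{(\ell)} W^{(\ell)}$.

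Next I would combine this with the definition $\NFM_\ell = (W^{(\ell)})\tran W^{(\ell)}$. The right-hand side of the proposition is then $\corr{\NFM_\ell,\,(W^{(\ell)})\tran K^{(\ell)} W^{(\ell)}}$, whose second argument is precisely $n\,\AGOP_\ell$. Finally, I would invoke the homogeneity of the correlation functional: each trace in $\corr{A,B} = \trace{A\tran B}\,\trace{A\tran A}^{-1/2}\,\trace{B\tran B}^{-1/2}$ scales so that $\corr{A, cB} = \corr{A,B}$ for every $c>0$. Taking $c = n$ shows the factor is immaterial, and the two correlations coincide.

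The main obstacle here is bookkeeping rather than difficulty: one must keep track of the fact that the normalization $\frac{1}{n}$ appearing in $\AGOP_\ell$ is absent from $K^{(\ell)}$ as written, and then note that this mismatch is harmless precisely because the correlation is scale-invariant in each argument. The chain-rule factorization itself is routine, relying only on the linearity of $h_\ell$ in $x_\ell$; once it is in hand the proposition follows by direct substitution.
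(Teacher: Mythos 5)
Your proposal is correct and takes essentially the same route as the paper: the paper likewise obtains the proposition by factoring the AGOP through the chain rule as $(W^{(\ell)})\tran K^{(\ell)} W^{(\ell)}$ and letting the scale-invariance of $\corr{\cdot,\cdot}$ do the rest. Your explicit bookkeeping of the $1/n$ normalization (absent from $K^{(\ell)}$ but present in $\AGOP_\ell$) only makes precise a positive constant the paper absorbs silently, which is immaterial for exactly the homogeneity reason you give.
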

This alignment holds trivially and exactly if $K^{(\ell)}$ is the
identity. However, the correlation can be high in trained networks where $K^{(\ell)}$ is non-trivial. For example, in the
chain monomial task (Figure~\ref{fig: Corrupted Ansatz}),
$K^{(0)}$ is far from identity (standard deviation of its eigenvalues is $5.9$ times its average eigenvalue),
but the NFA correlation is $0.93$ at the end of training. We also note that if $K^{(\ell)}$ is independent of $W^{(\ell)}$,
the alignment is lower than in trained networks; in the same
example, replacing $K^{(0)}$ with a matrix $Q$ with equal spectrum but random eigenvectors greatly reduces the correlation to $0.53$
and qualitatively disrupts the structure relative to the NFM
(Figure~\ref{fig: Corrupted Ansatz}, rightmost column). We show the same result for the CelebA dataset (see Appendix~\ref{app: real datasets}).  
Therefore, the NFA is a consequence of alignment between the left
eigenvectors of $W^{(\ell)}$ and $K^{(\ell)}$ in addition to spectral considerations.

Note that $K^{(\ell)}$ itself is the AGOP of the neural network $f$ with respect to the pre-activations at layer $\ell$, i.e.
\begin{equation}
K^{(\ell)} \equiv \frac{1}{n} \sum_{\alpha=1}^n \dfdx{f(x^{(\alpha)}_\ell)}{h_\ell} \dfdx{f(x^{(\alpha)}_\ell)}{h_\ell}\tran.
\end{equation}
For convolutional and attention layers, the $K^{(\ell)}$ are computed by additionally averaged over all patches and token positions in the input, respectively (see \citet{RadhakrishnanNFA, BeagleholeConvNFA} for the formulation of the NFA in these architectures).

\begin{figure*}
    \centering
    \subfloat{\includegraphics[scale=0.575]{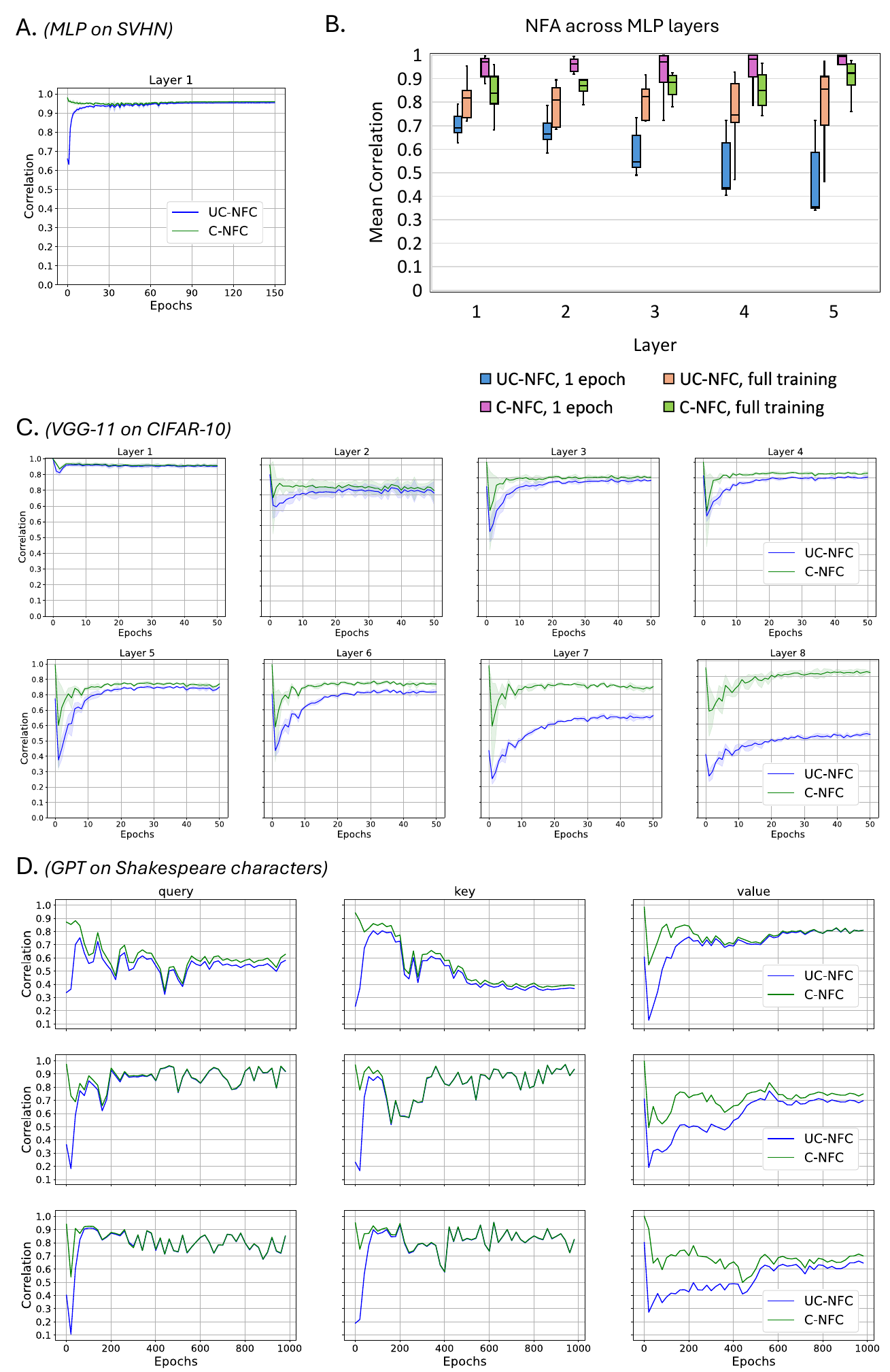}}
    \caption{Uncentered and centered neural feature correlations across (A,B) fully-connected, (C) convolutional, and (D) attention layers with large initialization scale. (A,C,D) show trajectories of C/UC-NFC over training. (B) shows NFC values across all layers of an MLP with five hidden layers, averaged over CIFAR-10, CIFAR-100, SVHN, MNIST, GTSRB, and STL-10 datasets. (A-C) are additionally averaged over three random seeds. Each row of (D) is an attention block (ordered from first to last in the GPT model), while the columns show correlations for query, key, and value layers, respectively.}
    \label{fig: real NFA measurements}
\end{figure*}

\section{Centering the NFC isolates weight-PTK alignment}
\label{sec: centering}

We showed that the neural feature ansatz is equivalent to PTK-weight alignment (Proposition~\ref{prop: ptk, nfa alignment}).
We now ask:
is the increase in the NFC due to alignment of the weight
matrices to the current PTK, or the alignment of the PTK to
the current weights? In practice, both effects matter, but numerical evidence suggests that changes in the PTK do not drive the early dynamics of the NFC (Appendix~\ref{app: other centerings}). Instead, we observe that the alignment 
between the weights and the PTK feature covariance is driven by a
\emph{centered} NFC
that captures alignment between the \emph{parameter changes} and the PTK. We then show this centered NFC can hold robustly in settings where the NFC holds with correlation less than 1, such as early in training and/or with large initialization. Finally, we establish analytically through the centered NFC how the NFA emerges as a structural property of gradient descent.  

We begin by considering a decomposition of the NFM and AGOP into parts that depend on initialization
and a part that depends on the changes in the weight matrix.
Let $W^{(\ell)}_{t}$ and $K^{(\ell)}_{t}$ be the
weight matrix and PTK feature covariance for layer $\ell$ at time $t$. We can write the NFM and AGOP in terms of the initial weights $W^{(\ell)}_{0}$ and
the change in weights $\Wb^{(\ell)}_{t} \equiv W^{(\ell)}_{t}-W^{(\ell)}_0$
as follows:
\begin{equation}
\begin{split}
    \NFM{}  = W_{t}\tran W_{t} & = \Wb_{t}\tran \Wb_{t} + W_0\tran \Wb_{t} + \Wb_{t}\tran W_0 + W_0\tran W_0,\\
    \AGOP{}  = W_{t}\tran K_{t} W_{t} & = \Wb_{t}\tran K_{t} \Wb_{t} + W_0\tran K_{t} \Wb_{t} + \Wb_{t}\tran K_{t} W_0 + W_0\tran K_{t} W_0
\end{split}
\label{eq:nfa_decomp}
\end{equation}
where we omitted $\ell$ from all terms for ease of notation. The first term in each decomposition
isolates the gram matrix of the \emph{changes} in the weight matrix. We call
$\Wb\tran \Wb$ the \emph{centered NFM} and $\Wb\tran K \Wb$ the \emph{centered AGOP}.
The centered AGOP in particular measures the
alignment of weight updates with the current PTK feature covariance.
Both terms are $0$ at initialization, and if the weight matrices change significantly
(that is, if $||\Wb||\gg ||W_{0}||$), both the NFM and AGOP are dominated by
the centered terms. (Note: in the limit that initialization scale vanishes to $0$ (i.e. small initialization), then we expect the centered and uncentered NFM and AGOP to coincide, as $\Wb = W_t - W_0 \rightarrow W_t$, where $t$ is a step at the end of training.)

The decomposition in Equation \ref{eq:nfa_decomp} suggests that updates which
drive correlation of the centered NFM and centered AGOP can also drive the overall value of the
original NFC. Inspired by this observation, we define the \emph{centered NFC}
(\CNFC{}) as, $$ \corr{(\Wb^{(\ell)})\tran \Wb^{(\ell)},(\Wb^{(\ell)})\tran K^{(\ell)} \Wb^{(\ell)}}~.$$ For the remainder of the paper, we will refer to the
original NFC as the \textit{uncentered NFC} (\UCNFC{}) to avoid ambiguity.

The centered NFC is consistently higher than the uncentered NFC across training times, architectures,
and datasets (Figure~\ref{fig: real NFA measurements}). This especially holds at early times and in deeper layers of a network
(Figure~\ref{fig: real NFA measurements}C and D, VGG-11 on CIFAR-10 and
the GPT model on Shakespeare respectively). For MLPs, we conduct a broader set of experiments and verify that the trends
hold on a wide range of vision datasets. We also note that the \CNFC{} at early times is relatively robust to the initialization statistics of the weight matrix $W^{(\ell)}$ - unlike the
\UCNFC{} (for additional experiments, see Appendix~\ref{app: nfa, vary data spectrum}). We will return to this point in Section~\ref{sec: learning speeds}. These experiments suggest the \CNFC{} is responsible for improvements in the uncentered correlation.

High \CNFC{} values can drive increases in the
\UCNFC{} as long as the centered NFM and centered AGOP are increasing in magnitude
during training. We can confirm that the weights move significantly from initialization,
which drives the contribution from the centered NFM and centered AGOP (Figures~\ref{fig: losses, MLP}, and \ref{fig: losses, VGG} in Appendix~\ref{app: real datasets}).
The increased importance of the \CNFC{} leads the \UCNFC{} to converge to the \CNFC{} at late times in most of our experimental settings, as the contribution from the weight changes
dominates the contribution from the initialization.
These findings validate that the \CNFC{} is an important contributor to increases in the \UCNFC{}.

Our experiments suggest that studying the \CNFC{} is a useful first step to understanding the neural feature ansatz.
In Section~\ref{sec: cnfc theory}, we develop a theoretical analysis to
understand why the \CNFC{} is generically large at early times. We then use this
analysis to predict the value of the \CNFC{} and motivate interventions which can keep the \CNFC{} high and promote
the NFA earlier on in training (Section~\ref{sec: learning speeds}).

\section{Theoretical analysis of the \CNFC{} at early times}\label{sec: cnfc theory}

\subsection{Gradient flow dynamics}

We now theoretically identify why the centered AGOP and NFM become correlated as a structural property of gradient descent, for at least the early training times.
We consider the setup introduced in Equation~\eqref{eq: neural net setup}. For theoretical convenience we focus on the case of training under gradient flow,
where the dynamics of a weight matrix $W^{(\ell)}$ trained on loss $\Lo$ is given by
\begin{equation}
\dot{W}^{(\ell)} \equiv \frac{dW^{(\ell)}}{dt} =  -\nabla_{W^{(\ell)}}\Lo~,
\end{equation}
where $\dW^{(\ell)}$ is the time derivative of $W^{(\ell)}$. We expect our results to apply to gradient descent with small and intermediate learning rates as well. For simplicity of notation, we omit the layer index $\ell$.

We note that $\Wb = 0$. This gives us the following proposition, which shows that early time dynamics of the centered NFM and AGOP are dominated by the \emph{second} time derivatives:
\begin{proposition}[Centered NFC dynamics]
\label{prop: Early NFA dynamics} Let $W$ be the weights of a fully-connected layer of a neural network at initialization and $X$ be the inputs to that layer. Then, when the neural network is trained by gradient flow on a loss function $\Lo$, we have $\bar{W}\tran \bar{W} = \bar{W}\tran K \bar{W} = 0$, $\frac{\dif}{\dif t} (\bar{W}\tran \bar{W}) = \frac{\dif}{\dif t} (\bar{W}\tran K \bar{W}) = 0$, and the first non-zero time derivatives satisfy,
\begin{align}
\frac{\dif^2}{\dif t^2} (\bar{W}\tran \bar{W}) = 2 \cdot X\tran \Ldot \Kn \Ldot X,\quad \frac{\dif^2}{\dif t^2} (\bar{W}\tran K \bar{W}) = 2 \cdot X\tran \Ldot \Kn^2 \Ldot X~
\end{align}
where $\Ldot$ is the $n\times n$ diagonal matrix of logit derivatives $\Ldot \equiv {\diag{\frac{\partial \Lo}{\partial f}} }$.
\end{proposition}
\begin{proof}[Proof of Proposition~\ref{prop: Early NFA dynamics}]
At initialization, all terms containing at least one copy of $\bar{W}$ vanish, leading to the zero and first time derivatives to be 0. Further, we have $\dW = \dfdx{f(X)}{h}\tran \Ldot X$. Therefore, using that $K \equiv \dfdx{f(X)}{h}\tran \dfdx{f(X)}{h}$ and $\Kn \equiv \dfdx{f(X)}{h} \dfdx{f(X)}{h}\tran$,
\begin{align*}
    \frac{1}{2} \frac{\dif^2}{\dif t^2} (\bar{W}\tran \bar{W}) = \dW\tran \dW = X\tran \Ldot \dfdx{f(X)}{h} \dfdx{f(X)}{h}\tran \Ldot X = X\tran \Ldot \Kn \Ldot X~,
\end{align*}
and,
\begin{align*}
    \frac{1}{2} \frac{\dif^2}{\dif t^2} (\bar{W}\tran \bar{W}) = \dW\tran K \dW &= X\tran \Ldot \dfdx{f(X)}{h}\, K\, \dfdx{f(X)}{h}\tran \Ldot X\\
    &= X\tran \Ldot \dfdx{f(X)}{h} \dfdx{f(X)}{h}\tran \dfdx{f(X)}{h} \dfdx{f(X)}{h}\tran \Ldot X\\
    &= X\tran \Ldot \Kn^2 \Ldot X~.
\end{align*}
\end{proof}

We immediately see how gradient-based training can drive
the \CNFC{} towards $1$ at early times; the first non-trivial
time derivatives are often highly correlated as they differ by only a single matrix power of $\Kn$, and have the same dependence on the labels.
If $\Kn$ is proportional to a projection matrix, then the two derivatives will have perfect correlation;
even if not, $\Kn$ and $\Kn^{2}$ will have identical eigenvectors, and hence we might expect a range of spectra will enable high correlation between them. In the case that $\Kn$ and $\Kn^2$ are sufficiently
correlated, the \CNFC{} is driven to a 
high value immediately upon training, and the high value of the \CNFC{} eventually drives the \UCNFC{} and causes the NFA to hold.

In the remainder of this section, we study this correlation in two different high-dimensional limits. Our analysis
suggests that for large models the \CNFC{} has a generic tendency to increase early
in training.

\subsection{Maximum \CNFC{} for data on the sphere}

We first provide a general and well-studied setting where the first non-zero derivatives of the centered NFM and AGOP are perfectly correlated - uniform data on the sphere in high dimensions \citep{ghorbani2020when, ghorbani2021linearized, theo2022}. This limit
corresponds to an infinite width network which, combined with the data symmetry,
induces the PTK matrix $\Kn$ to act like a projection matrix.

\paragraph{Data setup} We sample data $x \sim \sqrt{d} \cdot \mathrm{Unif}\round{\S^{d-1}}$ uniformly distributed on the sphere in $d$ dimensions with radius $\sqrt{d}$. We assume the labels are generated from a target function that maps $f^* : \S^{d-1} \rightarrow [-d, d]$. I.e. the label for the point $x$ is equal to $f^*(x)$. We train the parameters $a,W$ for a one-hidden layer fully-connected neural network $f(x) = f(x; a, W) = a\tran \phi(Wx)$ with element-wise activation function $\phi$.
For a learning trajectory $(a_{t}, W_{t})$, the initial values $a_{0}$ and $W_{0}$
are sampled
i.i.d. standard Gaussian with variance at most $O(1/k)$. I.e. for all $i,j,\ell$, we have $a_0(i), W_0(j, \ell) \sim \mathcal{N}(0,c)$ for $c = O(1/k)$.

\paragraph{Notation} We will use asymptotic notation $O_d, o_d, \Theta_d, \Omega_d, \omega_d$ in the usual way, where the limits are taken with respect to the data dimension $d$. $\wt{O}_d, \wt{o}_d, \wt{\Theta}_d, \wt{\Omega}_d, \wt{\omega}_d$ are equivalent to their previous definitions but hide dependencies of polylogarithmic functions of $d$. We will use $\|\cdot\|$ to refer to $\ell_2$ vector norm and the operator norm of a matrix, i.e. $\|A\| = \max_{v \in \Real^d} \frac{\|Av\|}{\|v\|}$, for a square matrix $A \in \Real^{d \times d}$. Recall we sample $n$ datapoints and use hidden dimension $k$ so that $a_t \in \Real^{k \times 1}$ and $W_t \in \Real^{k \times d}$. We write the vector of all ones in $p$ dimensions as $\one \in \Real^{p \times 1}$. We will omit the time index $t$ for ease of notation.

We make the assumption that $f^*$ has is bounded by a polylogarithmic factor of dimension
with superlinearly vanishing probability:
\begin{assumption}[Labels bounded]
\label{assump: bounded labels}
We have, for any positive $\delta>0$, with probability $1 - O_{d}(d^{-1-\delta})$, $|f^*(x)| = O_d(\log^\ell(d))$ for a constant integer $\ell$.
\end{assumption}
We also assume that the target function has a non-vanishing linear component:
\begin{assumption}[Non-trivial linear component]
\label{assump: linear component}
We have, $\|\Exp{x}{f^*(x) x}\| = \Omega_d(1)$.
\end{assumption}

These assumptions hold for example when $f^*$ is exactly linear, i.e. $f^*(x) = \beta\tran x$ for $\|\beta\| \leq 1$. In this case, the inner product $x\tran \beta = \wt{O}_d(1)$ with probability $O_d(d^{-1-\delta})$ for $x$ uniform on $\sqrt{d}\cdot\S^{d-1}$ and any positive $\delta$.

We make an additional assumption on the differentiability of the PTK, inherited from the activation $\phi$, according to the conditions from \citet{theo2022}. This assumption is needed for our analysis, in which differentiability enables us to Taylor expand the PTK matrix \citep{ElKarouiLinearize}. We restate this assumption.
\begin{assumption}[Differentiability of the PTK]
\label{assump: diff activation}
Let $\Kn(x,x') = \Exp{a_k, w_k}{a_k^2 \phi'(w_k\tran x) \phi'(w_k\tran x')} = h_d(\inner{x,x'}/d)$ be the PTK for the network $f$, where $h_d : [-1,1] \rightarrow \Real$ is a positive semi-definite kernel function, defined for each dimension $d$. We assume there exist finite $h(0), h'(0), h''(0) > 0$ such that $\lim_{d\goto\infty} h_d(0) = h(0)$, $\lim_{d\goto\infty} h'_d(0) = h'(0)$, and $\lim_{d\goto\infty} h''_d(0) = h''(0)$, where the first and second derivatives of $h_d$, $h'_d$ and $h''_d$, are assumed to exist on $[-1,1]$ for all $d$. 
\end{assumption}
Note this is a sub-case of Assumption~1 in \citet{theo2022} for level $\ell = 1$, and is satisfied for $h_d$ that is twice differentiable everywhere. We now introduce our final simplifying assumption on the activation function and data:

We will consider $f$ trained using mean-squared error (MSE) loss. With this loss, $\Ldot$ corresponds to the diagonal matrix of the residuals $y-f(x)$.
If the outputs of the network
is $0$ on the training data, then 
$\Ldot = Y \equiv \diag{y}$, the labels themselves. We can guarantee this by either of the following methods:
\begin{method}[Subtract copy]
\label{method 1}
Subtracting off an identical (untrained) copy of the neural network from each output at initialization. I.e. we train the parameters of the network $f$ on the loss with respect to outputs $\hat{f}_{t}(x) = f_{t}(x) - f_0(x)$, where $f_0(x)$ is a copy of $f$ at initialization and $f_{t}(x) = f(x;a_{t}, W_{t})$
\end{method}
\begin{method}[Small initialization]
\label{method 2}
Initializing $W=0$ or $\|a\| = \epsilon$ for $\epsilon>0$ arbitrarily small. 
\end{method}
We now state our main theorem.
\begin{theorem}[Maximum \CNFC{}]
\label{thm: high c-nfc}
Suppose the data $X$ are sampled uniformly at random from $\S^{d-1}$ and the labels are generated by $f^*$ satisfying Assumptions~\ref{assump: bounded labels} and \ref{assump: linear component}. Suppose we train $f$ with MSE loss and initialize the network so that the initial outputs are $0$ by either Method~\ref{method 1} or Method~\ref{method 2} above. Assume the activation function $\phi$ satisfies Assumption~\ref{assump: diff activation}. We consider the regime that $\omega_d\round{d\log^{2\ell+1}d} \leq n \leq o_d\round{d^{2-\delta}}$ for some $\delta > 0$, and width $k \rightarrow \infty$. Then, at $t=0$ in training (in the setting of Proposition~\ref{prop: Early NFA dynamics}), almost surely over the random $X$ as $n,d \goto \infty$,
\begin{align*}
    \corr{\frac{\dif^2}{\dif t^2} (\bar{W}\tran \bar{W}), \frac{\dif^2}{\dif t^2} (\bar{W}\tran K \bar{W})} \rightarrow 1~. 
\end{align*}
\end{theorem}
The proof follows from the mean term of $\Kn$ giving the leading order terms in the \CNFC{} derivatives calculation here, and $\one\one\tran$ is a rank-$1$ projector. The proof is deferred to Appendix~\ref{app: ommitted proofs}.

We clarify that although we take infinite width, we are not necessarily in the NTK regime, as we allow for arbitrary scaling of the weights, including the $\mu$P parametrization \citep{MuP}. Hence, our setting allows for feature learning.

In the next section we will construct
a dataset which interpolates between adversarial
and aligned eigenstructure to demonstrate the range of possible values the derivative correlations can take, and theoretically predict their values.



\subsection{Early time \CNFC{} dynamics in co-scaling regime}
Another interesting limit is the linear co-scaling regime, where $n, d, k\to\infty$
with $n/d\equiv \psi_{1}$ and $k/d\equiv \psi_{2}$. Here, we show it is possible to theoretically predict the correlation of the derivatives of the centered NFM and AGOP.

In this regime, we expect the
following two properties \citep{AdlamLinearize2}, which we state as assumptions. First, our key quantities can be written as traces of products of large random matrices; as the dimensions of all such matrices increase, we expect these traces to converge to their average values. This property is known as \textit{self-averaging}.
\begin{assumption}[Self-averaging] We assume that the expected traces appearing in the NFC across initializations are equal to the traces themselves.
\end{assumption}

The elements of weight matrices $W$ are 
drawn from independent Gaussians (i.i.d. within each matrix).
If the data $X$ were also standard Gaussian, we would be able to apply free probability --- the noncommutative analog of classical independence ---
to compute traces of matrix products involving analytic functions of $W$ and $X$ in the limit of large
dimensions \citep{FreeProbabilityBook}. To apply free probability for more general $X$, we require the following assumption: 
\begin{assumption}[Asymptotic freedom of initial parameters, and input-label pairs] 
We assume $W$ and $X$ are asymptotically freely independent. Further, we assume the labels $Y(X)$ are asymptotically
free of $W$ (but not $X$).
\end{assumption}
For example, the labels in the student teacher setups of \citet{AdlamLinearize1, AdlamLinearize2} satisfy this condition.

From Proposition \ref{prop: Early NFA dynamics}, we know that for MSE loss the correlation of NFM/AGOP time derivatives
under gradient flow at initialization can be written as:
\vspace{-0.1cm}
\begin{equation}
\label{eq: nfa equation}
\begin{split}
    &\corr{\dW\tran \dW, \dW\tran K \dW} =\, \trace{X\tran Y \Kn Y X X\tran Y \Kn^2 Y X} \cdot \trace{(X\tran Y \Kn Y X)^2}^{-1/2}\cdot \trace{(X\tran Y \Kn^2 Y X)^2}^{-1/2}.
\end{split}
\end{equation}
In our high-dimensional limit, we expect that, for example, the first term can be given as
\begin{equation}
     \lim_{n,d,k\to\infty}\trace{X\tran Y \Kn Y X X\tran Y \Kn^2 Y X}= \lim_{n,d,k\to\infty}\Exp{\theta}{\trace{X\tran Y \Kn Y X X\tran Y \Kn^2 Y X}}.
\end{equation}
We can decompose the average as
\begin{equation}
\label{eqn: nfa, mean-variance}
\begin{split}
    &\Exp{\theta}{\trace{X\tran Y \Kn Y X X\tran Y \Kn^2 Y X}} =
    \trace{X\tran Y \Exp{\theta}{\Kn} Y X X\tran Y \Exp{\theta}{\Kn^2} Y X}
    + \trace{\Cov{X\tran Y \Kn Y X, X\tran Y \Kn^2 Y X}}
\end{split}
\end{equation}
with similar decompositions for the denominator term.

Therefore if the statistics of $\Kn$ can be understood as a function of $X$, we can compute the correlation in this linear triple-scaling
limit. We focus for now on a one-hidden layer quadratic network (similar to the previous section) to avoid
the branching of terms that arises in more complicated networks. We provide some additional analysis of the first term
in Equation \ref{eqn: nfa, mean-variance} for more complicated architectures in Appendix~\ref{sec: mean predictions}.

\subsection{Exact predictions with one hidden layer and quadratic activations}

\begin{figure*}[!h]
    \centering
    \includegraphics[scale=0.5]{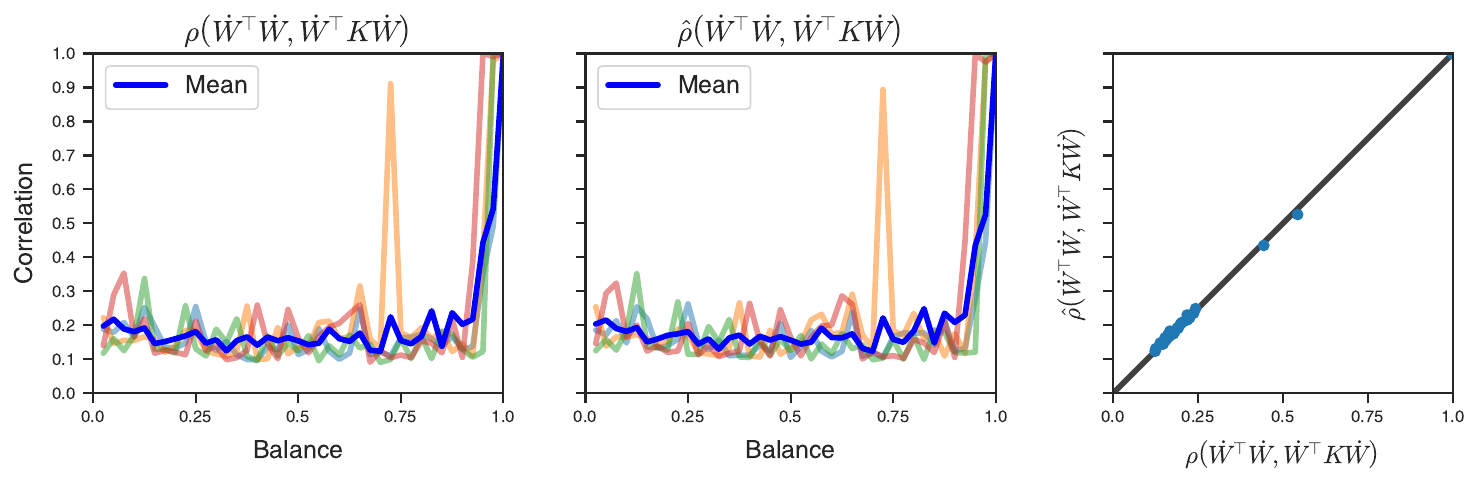}
    \caption{Predicted versus observed correlation of the second derivatives of centered $\NFM$ and $\AGOP$ on the alignment reversing dataset. Different shaded color curves correspond to four different seeds for the dataset. The solid blue curve is the average over all data seeds. The rightmost sub-figure is a scatter plot of the predicted versus observed correlations of these second derivatives, with one point for each balance value. We instantiate the dataset in the proportional regime where  width, input dimension, and dataset size are all equal to $1024$.}
    \label{fig: Predicted NFA calcs}
\end{figure*}

Concretely, we study a neural network $f$ which can be written as $f(x) = a\tran (Wx)^2$, where $a$ is the readout layer, and the square is element-wise. Let $\Amat = (X\tran Y X)^2$, and $\Bmat = X\tran X$, and $\Rmat = W\tran\diag{a^{2}}W$. Then,
the numerator of Equation \ref{eqn: nfa, mean-variance} can be written as:
\begin{equation*}
\trace{X\tran Y \Kn Y X X\tran Y \Kn^2 Y X} = \trace{\Amat \Rmat \Amat \Rmat \Bmat \Rmat}~.
\end{equation*}

We reiterate our assumptions and compute this trace (as well as those for the denominator terms) in Appendix~\ref{app: free probability calculation} using
standard results from random matrix theory. These calculations show us that the correlation is determined by traces of powers of $\Amat$ and $\Bmat$ by the calculations in Section~\ref{app: free probability calculation}, which are properties of the input-label pairs, and $\Rmat$, which is specific to the architecture and initialization statistics.

\paragraph{Manipulating the \CNFC{}} To numerically explore the validity of the random matrix theory calculations, we developed a method to generate datasets with different values of $\corr{\dW\tran \dW, \dW\tran K \dW}$. We construct a random dataset called the \textit{alignment reversing} dataset, parameterized by a \textit{balance} parameter $\gamma \in (0,1]$ to adversarially 
disrupt the NFA near initialization in the regime that width $k$, input dimension $d$, and dataset size $n$ are all equal ($n=k=d=1024$). By Proposition~\ref{prop: expected nfm for quadratic}, for the aforementioned neural 
architecture, the expected second derivative of the centered NFM satisfies, $\Exp{}{\dW\tran \dW} = X\tran Y \Exp{}{\Kn} Y X = (X\tran Y X)^2$, while the expected second derivative of the 
centered AGOP, $\Exp{}{\dW\tran K \dW} = X\tran Y \Exp{}{\Kn^2} Y X$, has an additional component $X\tran Y X \cdot X\tran X \cdot X\tran Y X$. Our construction exploits this difference in that 
$X\tran X$ becomes adversarially unaligned to $X\tran YX$ as the balance parameter decreases. 

The construction exploits that we can manipulate $X\tran Y X \cdot X\tran X \cdot X\tran Y X$ freely of the NFM using a certain choice of $Y$. We design the dataset such that this AGOP-unique term is close to identity, while the NFM second derivative has many large off-diagonal entries, leading to low correlation between the second derivatives of the NFM and AGOP.

In our experiment, we sample multiple random datasets with this construction and compute the predicted and observed correlation of the second derivatives of the centered NFC at initialization. For specific details on the construction, see Appendix~\ref{app: balance dataset}.

We observe in Figure~\ref{fig: Predicted NFA calcs} that the centered NFC predicted with random matrix theory closely matches the observed values, across individual four random seeds and for the average of the correlation across them. Crucially, a single neural network is used across the datasets, confirming the validity of the self-averaging assumption. The variation in the plot across seeds come from randomness in the sample of the data, which cause deviations from the adversarial construction.

\section{Increasing the centered contribution to the NFC strengthens feature learning}
\label{sec: learning speeds}

Our theoretical and experimental work has established that gradient based training leads
to alignment of the weight matrices to the PTK feature covariance.
This process is driven by the \CNFC{}. Therefore, one path towards improving the neural feature correlations is to increase the contribution of the \CNFC{} to the
dynamics - as measured, for example, by the centered-to-uncentered, or, C/UC, ratio
$\trace{\Wb\tran \Wb \Wb\tran K \Wb} \trace{W\tran W W\tran K W}^{-1}$. 
When this ratio is large, the \CNFC{} contributes significantly in magnitude to the \UCNFC{}, indicating successful feature learning as measured by the uncentered NFC.
We will discuss how small initialization promotes feature learning, and design an optimization rule, \OptName{}, that increases the C/UC ratio and drives the value of the \UCNFC{} to $1$.

\subsection{Feature learning and initialization}

One factor that modulates the level of feature learning is the scale of the initialization in each layer. In our experiments training networks with unmodified gradient descent, we observe that the centered NFC will increasingly dominate the uncentered NFC with decreasing initialization (third panel, first row, Figure~\ref{fig: NFA measurements, speed fixing}). Further, as the centered NFC increases in contribution to the uncentered quantity, the strength of the UC-NFC, and to a lesser extent,
the C-NFC increases (Figure~\ref{fig: NFA measurements, speed fixing}). The decreases in correspondence between these quantities is also associated with a decrease in the feature
quality of the NFM (Appendix~\ref{app: initialization and feature learning}), for the chain monomial task.

For fully-connected networks with homogeneous activation functions, such as ReLU, and no normalization layers,
decreasing initialization scale is equivalent to decreasing the scale of the outputs, since we can write $f(Wx) = a^{-p} f(a W x)$ for any scalar $a$ for any homogenous activation
$f$. This in turn is equivalent to increasing the scale of the labels. Therefore, decreasing initialization forces the weights to change more in order to fit the labels, leading to more change in $F$ from its initialization. Conceptually, this aligns with the substantial line of empirical and theoretical evidence that increasing initialization scale or output scaling transitions training between the lazy and feature learning regimes \citep{ChizatLazyTraining, InitLazyTraining, TempCheck, LargeInitGrokking}.

This relationship suggests that small initialization can be broadly applied to increase the change in \NFM{}, and the value of the \UCNFC{}. However, this may not
be ideal; for example, if the activation function is differentiable at $0$, small initialization
leads to a network which is approximately linear.
This may lead to low expressivity unless the learning dynamics can increase the weight magnitude.

\subsection{\OptName{}}

\begin{figure*}[t]
    \centering
    \includegraphics[scale=0.55]{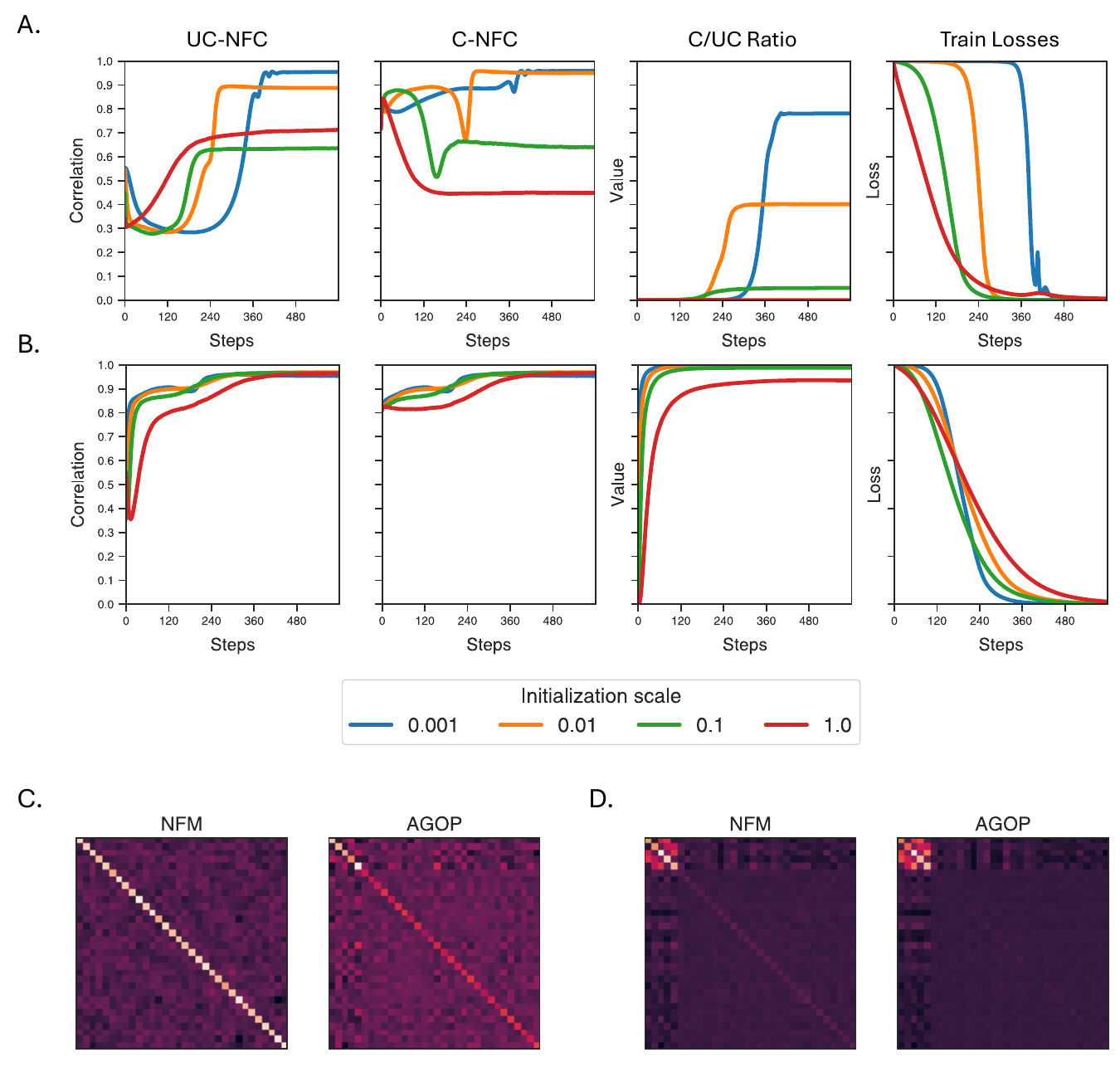}
    \caption{The effect of \OptNameAcronym{} on C/UC neural feature correlations and feature learning on the chain monomial task. In the first two rows, we plot the uncentered and centered NFA for the first layer weight matrix as a function of initialization, (A) with standard training and (B) with \OptNameAcronym. We consider a two hidden layer network with ReLU activations, where we set $C_0 = 500$, and $C_1 = C_2 = 0.002$. The third column shows the ratio of the unnormalized \CNFC{} to the \UCNFC{}: ${\trace{\bar{W}\tran \bar{W} \bar{W}\tran K \bar{W}} \cdot  {\trace{W\tran W W\tran K W}}^{-1}}$. The fourth column shows the training loss. In the third row, we plot the NFM and AGOP from a trained network with (C) standard training and (D) with \OptName{} with fixed initialization scale of $1.0$.}%
    \label{fig: NFA measurements, speed fixing}
\end{figure*}

We can instead design an intervention which can increase feature learning without the need to decrease the initialization scale. We do so by fixing the learning \textit{speed} layerwise to constant values, which causes the \CNFC{} to dominate the \UCNFC{} dynamics. For weights at layer $\ell$ and learning rate $\eta > 0$, we introduce \textit{\OptName{} (SLO)}, which is characterized by the following update rule,
\begin{align*}
    W^{(\ell)}_{t+1} \leftarrow W^{(\ell)}_{t} - \eta \cdot C_\ell \cdot \frac{\grad_{W^{(\ell)}_{t}} \Lo}{\|\grad_{W^{(\ell)}_{t}} \Lo\|}
\end{align*}
where the hyperparameter $C_\ell \geq 0$ controls
the amount of learning in layer $\ell$.
We expect this rule to increase the strength of the \UCNFC{} in layers where $C_\ell$ is large relative to $C_m$ for $m \neq \ell$, as $W^{(\ell)}$ will be forced to change significantly from initialization. By forcing the weights in a particular layer $\ell$ to have fixed learning speeds, these weights will have fixed updates sizes at every epoch, regardless of the loss. We downscale the speeds in other layers $\ell \neq m$ to prevent training instability. As a result, $\norm{W_0^{(\ell)}}\norm{W_t^{(\ell)}}^{-1} \goto 0$ for large $t$, causing the centered and uncentered NFC to coincide for this layer. 

We demonstrate the effects of SLO on the chain-monomial task (Equation~\eqref{eq:chain monomial}). We found that fixing the learning speed to be high in the first layer and low in the remaining layers causes the ratio of the unnormalized \CNFC{} to the \UCNFC{} to become close to $1$ across initialization scales (Figure~\ref{fig: NFA measurements, speed fixing}). The same result holds for the SVHN dataset (see Appendix~\ref{app: real datasets}). We note that this intervention can be applied to target underperforming layers and improve generalization in deeper networks (see Appendix~\ref{app: more SLO experiments} for details).

We observe that both the \CNFC{} and the
\UCNFC{} become close to $1$ after training with SLO, independent
of initialization scale
(Appendix~\ref{app: initialization and feature learning}). Further, the quality of the features learned, measured by the similarity
of $\NFM$ and $\AGOP$ to the true 
EGOP, significantly improve and are more 
similar to each other with SLO, even with large initialization. In contrast,
in standard training the 
\UCNFA{} fails to develop with large initialization, as $\NFM$ 
resembles identity (no feature learning), while $\AGOP$ only slightly captures the relevant features.

\OptName{} is a step toward the design of optimizers that improve generalization and training times through maximizing feature learning.

\section{Discussion}
\paragraph{Analyzing more general settings} In principle our analysis can
be extended to a larger sample size and activation functions with uncentered derivatives. Both of these settings require analyzing more terms in the Taylor expansion of the PTK matrix.
This is more complicated than studying the loss, as our most complex calculations require understanding degree $8$ polynomials of the inputs
even in the simplest case of a one-hidden layer network (as opposed to degree $4$ to understand the loss).
We may also want to understand the case that $n = d^\ell$ for integer $\ell$ (i.e. without the logarithmic factor we consider). The appropriate Taylor expansions in these cases are discussed in \citet{theo2022}, and will likely require additional structure on the coefficients of the target function $f^*$ in the basis of spherical harmonics.

\paragraph{Centered NFC through training}

Our theoretical analysis in this work shows that the PTK feature covariance at initialization has a relatively simple structure in terms of the weights and the data. To predict the NFC later in training, we will likely need to account for the change in this matrix. One should be able to predict this development 
at short times by taking advantage of the fact that
eigenvectors of the Hessian change slowly during
training \citep{HessianInertia}. An alternative approach would be to use a quadratic model for neural network dynamics \citep{AtishQuadratic, LibinQuadratic} which can capture dynamics of the empirical NTK (and therefore the PTK). One promising avenue to study the time dynamics of the PTK could be to adapt the results of \citet{wang2024nonlinear}, which study how the effects of feature learning can propagate to the conjugate kernel for neural networks. It may also be useful to unify our notion of alignment with the observations of \citet{alignment2023}, where they find that the weights align with the averaged outer products of the loss gradients.

\paragraph{\OptName{}} There are prior works that implement differential learning rates across layers \citep{AdaptiveLRfinetuning, LayerSpecificLRs}, though these differ from differential learning \textit{speeds} as in \OptNameAcronym{}, in which the L2-norm of the gradients are fixed, and not the scaling of the gradients. Our intervention is more similar to the LARS \citep{LARS} and LAMB \citep{LAMB} optimizers that fix learning speeds to decrease training time in ResNet and BERT architectures. However, these optimizers fix learning speeds to the norm of the weight matrices, while our intervention sets learning speeds to free hyperparameters in order to increase feature learning, as measured by the NFC. We also point out that as an additional explanation for the success of \OptNameAcronym{} in our experiments, the PTK feature covariance may change slowly (or just in a single direction) with this optimizer, because the learning rate is small for later layers, allowing the first layer weights to align with the PTK.

Other works consider interventions that vary the strength of feature learning across layers. The authors of \citet{MuP} down-scale the final layer weights so that the weights in earlier layers must move significantly in $\ell_\infty$ norm to fit the data. The SLO intervention is more extreme than this approach in that SLO overrides the natural GD dynamics to exactly set the weight movement rates. In \citet{chizat2024featurespeedformulaflexible} the authors consider adjusting learning rates in each layer so as to force the activation vectors to substantially change across all layers. Instead, SLO sets learning speeds of the weights in each layer so that some weights move much faster than others, depending on the layer where we want to maximize feature learning.

\paragraph{Adaptive and stochastic optimizers}

We demonstrate in this work that the dataset (by our construction in Section~\ref{sec: cnfc theory}) and optimizer (with \OptNameAcronym{} in Section~\ref{sec: learning speeds}) play a significant role in the strength of feature learning. Important future work would be to understand other settings where significant empirical differences exist between optimization choices. In particular, analyzing the NFC may clarify the role of gradient batch size and adaptive gradient methods in generalization \citep{CatapultsFeatureLearning}. 

\section{Acknowledgements}

We thank Lechao Xiao for detailed feedback on the manuscript. We also thank Jeffrey
Pennington for helpful discussions. This work used the programs (1) XSEDE (Extreme science and engineering discovery environment) which is supported by NSF grant numbers ACI-1548562, and (2) ACCESS (Advanced cyberinfrastructure coordination ecosystem: services \& support) which is supported by NSF grants numbers \#2138259, \#2138286, \#2138307, \#2137603, and \#2138296. Specifically, we used the resources from SDSC Expanse GPU compute nodes, and NCSA Delta system, via allocations TG-CIS220009.

\bibliographystyle{unsrtnat}
\bibliography{aux/references, aux/related-work}

\newpage
\appendix
\onecolumn

\clearpage
\section{Glossary}
\label{app: glossary}
\begin{enumerate}[leftmargin=*]
    \item \textbf{Expected gradient outer product (EGOP).} Defined with respect to a target function $f^*$ and an input data distribution $\mu$ over data in $d$ dimensions. $$\EGOP(f^*, \mu) \equiv \Exp{x\sim\mu}{\grad_x f^*(x) \grad_x f^*(x)\tran} \in \Real^{d \times d}.$$
    \item \textbf{Average gradient outer product (AGOP).} Defined with respect to a predictor $f$ and a set of inputs $\{x^{(1)}, \ldots, x^{(n)}\}$. $$\mathrm{AGOP}(f, \{x^{(\alpha)}\}_\alpha) \equiv \frac{1}{n} \sum_{\alpha=1}^n \dfdx{f(x^{(\alpha)})}{x} \dfdx{f(x^{(\alpha)})}{x}\tran \in \Real^{d \times d}.$$
    In the context of a deep neural network, we write the AGOP $\AGOP_\ell$ with respect to the inputs of a given layer $\ell$ as $$\AGOP_\ell \equiv \frac{1}{n} \sum_{\alpha=1}^n \dfdx{f(x^{(\alpha)})}{h_\ell} \dfdx{f(x^{(\alpha)})}{h_\ell}\tran.$$
    \item \textbf{Neural feature matrix (NFM).} Given a neural network $f$ with a weight matrix $W^{(\ell)}$ at layer $\ell$, the NFM $\NFM_\ell$ is defined as, $$\NFM_\ell \equiv (W^{(\ell)})\tran W^{(\ell)}.$$
    \item \textbf{Neural feature correlation (NFC).} Defined with respect to a layer of a neural network, this is the correlation $\corr{\NFM_\ell, \AGOP_\ell}.$
    \item \textbf{Neural feature ansatz (NFA).} This refers to the statement that the NFC will have correlation approximately equal to $1$.
    \item \textbf{Pre-activation tangent kernel (PTK).} This is the kernel function corresponding, defined with respect to a neural network and a layer $\ell$, that evaluates two inputs $x, z$ to $\dfdx{f(x)}{h_\ell} \cdot \dfdx{f(z)}{h_\ell} \in \Real$.
    \item \textbf{PTK feature covariance.} For a given layer of a neural network and a dataset $X$, the PTK feature covariance $K^{(\ell)}$ is defined as $K^{(\ell)} \equiv \dfdx{f(X)}{h_\ell}\tran \dfdx{f(X)}{h_\ell} \in \Real^{k_\ell \times k_\ell}$.
    \item \textbf{Centered NFC (C-NFC).} Consider a layer $W$ of a neural network with PTK feature covariance $K$ with respect to that layer. Let $\Wb \equiv W - W_0$, where $W_0$ are weights at initialization. Then, the C-NFC is equivalent to the correlation $\corr{\Wb\tran \Wb, \Wb\tran K \Wb}$.
    \item \textbf{Uncentered NFC (UC-NFC).} This quantity is identical to the NFC defined above.
\end{enumerate}

\clearpage
\section{Connection of the PTK to the empirical NTK}
\label{app: ptk and entk}
We note the following connection between the PTK entries and the empirical NTK (ENTK). In particular, the PTK is a significant component of the ENTK.
\begin{proposition}[Pre-activation to neural tangent identity]
\label{prop: PTK->NTK}
Consider a depth-$L$ neural network $f(x)$ with inputs $X_\ell \in \Real^{n \times k_{\ell}}$ to  weight matrices $W^{(\ell)} \in \Real^{k_{\ell} \times k_{\ell}}$ for $\ell \in [L]$. Consider the empirical NTK where gradients are taken only with respect to $W^{(\ell)}$ evaluated between two points $x$ and $z$ denoted by $\ENTK_\ell(x_0, z_0) = \inner{\dfdx{f(x_0)}{W^{(\ell)}}, \dfdx{f(z_0)}{W^{(\ell)}}}$. Then for all $\ell$, $\ENTK_\ell$ satisfies,
\begin{align*}
    \ENTK_\ell(x_0, z_0) = \Kn^{(\ell)}(x_0, z_0) \cdot x_\ell\tran z_\ell~.
\end{align*}
\end{proposition}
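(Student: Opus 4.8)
The plan is to exploit the feedforward structure, which makes $f$ depend on the weight matrix $W^{(\ell)}$ only through the pre-activation $h_\ell = W^{(\ell)} x_\ell$. Because of this, the chain rule forces the gradient of $f$ with respect to $W^{(\ell)}$ to be a rank-one (outer product) matrix whose two factors are the pre-activation gradient $\dfdx{f}{h_\ell}$ and the layer input $x_\ell$. Once this factorization is established, the empirical NTK becomes a Frobenius inner product of two rank-one matrices, which separates into a product of two ordinary dot products — exactly the claimed product of the PTK $\Kn^{(\ell)}(x_0,z_0)$ and the input correlation $x_\ell\tran z_\ell$.

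First I would compute the gradient entrywise. Since $(h_\ell)_p = \sum_q W^{(\ell)}_{pq} (x_\ell)_q$, we have $\partial (h_\ell)_p / \partial W^{(\ell)}_{ij} = \delta_{pi} (x_\ell)_j$, so by the chain rule
\[
\dfdx{f(x_0)}{W^{(\ell)}_{ij}} = \sum_p \dfdx{f(x_0)}{(h_\ell)_p} \delta_{pi} (x_\ell)_j = \dfdx{f(x_0)}{(h_\ell)_i} (x_\ell)_j~.
\]
In matrix form this reads $\dfdx{f(x_0)}{W^{(\ell)}} = \dfdx{f(x_0)}{h_\ell}\, x_\ell\tran$, the outer product of the pre-activation gradient (a column vector) with the input (viewed as a row vector), and the analogous identity holds for $z_0$ with $z_\ell$ in place of $x_\ell$.

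Next I would substitute these outer products into the Frobenius inner product defining $\ENTK_\ell$. Using the elementary identity $\inner{u v\tran, p q\tran} = \trace{(u v\tran)\tran p q\tran} = (u\tran p)(v\tran q)$ for rank-one matrices, with $u = \dfdx{f(x_0)}{h_\ell}$, $v = x_\ell$, $p = \dfdx{f(z_0)}{h_\ell}$, and $q = z_\ell$, gives
\[
\ENTK_\ell(x_0,z_0) = \round{\dfdx{f(x_0)}{h_\ell}\tran \dfdx{f(z_0)}{h_\ell}} \cdot \round{x_\ell\tran z_\ell}~.
\]
The first factor is precisely the definition of the PTK $\Kn^{(\ell)}(x_0,z_0)$ and the second is $x_\ell\tran z_\ell$, which establishes the claim.

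I do not expect a genuine obstacle here: the result is a direct chain-rule-plus-bookkeeping computation. The only points requiring care are (i) verifying that the dependence of $f$ on $W^{(\ell)}$ factors entirely through $h_\ell$ — immediate from the layered definition $x_{\ell+1} = \phi(h_\ell)$, since every later layer sees $W^{(\ell)}$ only via $x_{\ell+1}$ — and (ii) tracking index conventions so that the Frobenius inner product of the two rank-one gradient matrices factorizes into the \emph{product} of the two dot products rather than mixing their indices. For a multi-output network the same argument goes through verbatim, with $\dfdx{f}{h_\ell}$ becoming a Jacobian and the PTK factor becoming the corresponding matrix inner product.
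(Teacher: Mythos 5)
Your proof is correct and follows essentially the same route as the paper's: both establish that the gradient with respect to $W^{(\ell)}$ is the rank-one outer product $\dfdx{f(x_0)}{h_\ell}\, x_\ell\tran$ and then factorize the Frobenius inner product of two such outer products into the product of the PTK term and $x_\ell\tran z_\ell$. Your version is in fact slightly more careful, since you derive the outer-product formula entrywise via the chain rule (and correctly write $x_\ell\tran$ where the paper's proof has a typo, $x_0\tran$), whereas the paper asserts it directly.
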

\begin{proof}[Proof of Proposition~\ref{prop: PTK->NTK}]
Note that $\dfdx{f(x_0)}{W^{(\ell)}} = \dfdx{f(x_0)}{h_\ell} x_0\tran \in \Real^{k_\ell \times k_{\ell}}$. Then,
\begin{align*}
    \ENTK_\ell(x_0,z_0) &= \trace{ \dfdx{f(x_0)}{h_\ell} x_\ell\tran z_\ell  \dfdx{f(z_0)}{h_\ell}\tran}\\
    &= \dfdx{f(x_0)}{h_\ell}\tran  \dfdx{f(z_0)}{h_\ell} \cdot x_\ell\tran z_\ell\\
    &= \Kn^{(\ell)}(x_0,z_0) \cdot x_\ell\tran z_\ell~.
\end{align*}
\end{proof}

\section{Additional proofs and statements}
\label{app: ommitted proofs}

\begin{theorem*}[Maximum \CNFC{}]
Suppose the data $X$ are sampled uniformly at random from $\S^{d-1}$ and the labels are generated by $f^*$ satisfying Assumptions~\ref{assump: bounded labels} and \ref{assump: linear component}. Suppose we train $f$ with MSE loss and initialize the network so that the initial outputs are $0$ by either Method~\ref{method 1} or Method~\ref{method 2} above. Assume the activation function $\phi$ satisfies Assumption~\ref{assump: diff activation}. We consider the regime that $\omega_d\round{d\log^{2\ell+1}d} \leq n \leq o_d\round{d^{2-\delta}}$ for some $\delta > 0$, and width $k \rightarrow \infty$. Then, at $t=0$ in training (in the setting of Proposition~\ref{prop: Early NFA dynamics}), almost surely over the random $X$ as $n,d \goto \infty$,
\begin{align*}
    \corr{\frac{\dif^2}{\dif t^2} (\bar{W}\tran \bar{W}), \frac{\dif^2}{\dif t^2} (\bar{W}\tran K \bar{W})} \rightarrow 1~. 
\end{align*}
\end{theorem*}
\begin{proof}[Proof of Theorem~\ref{thm: high c-nfc}]
Applying Proposition~\ref{prop: Early NFA dynamics} and using that we train with MSE and zero initial outputs, we have
\begin{align*}
    \frac{1}{2} \frac{\dif^2}{\dif t^2} (\bar{W}\tran \bar{W}) = X\tran Y \Kn Y X,\quad
    \frac{1}{2} \frac{\dif^2}{\dif t^2} (\bar{W}\tran \bar{W}) = X\tran Y \Kn^2 Y X~.
\end{align*}
Given that $k \rightarrow \infty$, $\Kn$ is equal to a deterministic kernel matrix conditioned on the inputs $X$ (i.e. does not depend on the sampled initial weights). We then use that we are in the proportional limit to approximate the kernel matrix $\Kn$ by its Taylor expansion \citep{ElKarouiLinearize, LuLinearize, ba2019generalization, theo2022}. Namely, we have that with probability $1 - o_d(1)$,
\begin{align}
\label{eq: linearization}
    \Kn = \gamma \one\one\tran + \frac{\alpha}{d} XX\tran + \mu I + o_d(1) \cdot \Delta~,
\end{align}
where $\gamma = h_d(0) + \frac{1}{d} h''_d(0)$, $\alpha = h'(0)$, $\mu = h(1) - h(0) - h'(0)$, and $\Delta$ with $\|\Delta\|_2 = 1$ is a matrix. In general, we will write standalone asymptotic variables to indicate matrices $\Delta$ of that order spectral norm.

By the linearization in equation~\eqref{eq: linearization} and the structure of $XX\tran$, we will show,
\begin{align*}
    \Kn = \Theta(1) \cdot \one\one\tran  + \wt{O}\round{1} \cdot \Delta_1,\quad \Kn^2 = \Theta(d) \cdot \one\one\tran  + \wt{O}\round{d} \cdot \Delta_2~,
\end{align*}
where $\Delta_1, \Delta_2$ have spectral norm $1$. To prove this, first note that $\frac{1}{n} X\tran X = I + o(1)$ as uniform data on the sphere are $O(1)$-subgaussian and have identity covariance. Therefore, as $n = \wt{\Theta}(d)$, and the spectra of $X\tran X$ and $XX\tran$ are identical, we have that $\|\frac{1}{d} XX\tran\| = \|\frac{1}{d} X\tran X\| = \wt{O}(1)$.

We now analyze the numerator in the correlation (using the normalized trace). To simplify notation we write $\inner{A,B} \equiv \trace{A\tran B}$ for matrices $A,B$. The numerator is then:
\begin{align*}
    \inner{X\tran Y \Kn Y X, X\tran Y \Kn^2 Y X} &=  \wt{\Theta}(d) \cdot  \trace{(X\tran yy\tran X)^2}\\
    &+ \wt{O}(d) \cdot \trace{(X\tran Y \Delta_1 Y X) (X\tran yy\tran X)}\\
    &+ \wt{O}(d) \cdot \trace{(X\tran yy\tran X) (X\tran Y \Delta_2 Y X)}\\
    &+ \wt{O}(d) \cdot \trace{(X\tran Y \Delta_1 Y X) (X\tran Y \Delta_2 Y X)}\\
\end{align*}
As the labels are bounded, $\|Y \Delta_1 Y\|, \|Y \Delta_2 Y\| = \wt{O}(1)$. Therefore, by the sub-multiplicative property of the spectral norm and that $\|X\tran\|\|X\| = \|X\tran X\|$, we have $\|X\tran Y \Delta_1 Y X\| \leq \|Y \Delta_1 Y\| \|X\tran X\| = \wt{O}(1) \cdot \|X\tran X\|$ and $\|X\tran Y \Delta_2 Y X\| = \wt{O}(1) \cdot \|X\tran X\|$. Finally, again applying $\|X\tran X\| =\wt{O}(d)$, 
\begin{align*}
    \trace{(X\tran Y \Delta_1 Y X) (X\tran yy\tran X)} \leq \wt{O}(1) \cdot \|X\tran X\| \cdot \|X\tran y\|^2 = \wt{O}(d) \cdot \|X\tran y\|^2,
\end{align*}
and,
\begin{align*}
    \trace{(X\tran yy\tran X) (X\tran Y \Delta_2 Y X)} \leq \wt{O}(d) \cdot \|X\tran y\|^2.
\end{align*}
The fourth trace then satisfies,
\begin{align*}
    \trace{(X\tran Y \Delta_1 Y X) (X\tran Y \Delta_2 Y X)} \leq \wt{O}(1) \cdot \|X\tran X\|^2 = \wt{O}(d^2)
\end{align*}
Combining the four traces and using that $\trace{(X\tran yy\tran X)^2} = \trace{(y\tran XX\tran y)^2} = \|X\tran y\|^4$,
\begin{align*}
    \inner{X\tran Y \Kn Y X, X\tran Y \Kn^2 Y X} &= \wt{\Omega}(d) \cdot \|X\tran y\|^4 + \wt{O}(d^2) \cdot \|X\tran y\|^2 + \wt{O}(d^3)~.
\end{align*}
Recall by Assumption~\ref{assump: linear component}, we have $\|\Exp{x}{x f^*(x)}\| = \Omega(1)$. We apply Lemma~\ref{lemm: sub gaussian vector concentration} for our choice of $n$ so that $\|\frac{1}{n}X\tran y - \Exp{}{xf^*(x)}\| = o(1)$ w.p. $1-o(1)$. As a consequence, we have $\|X\tran y\| = \wt{\Omega}(d)$ w.h.p. Therefore, the leading term in the numerator of the correlation is that of $\trace{(X\tran yy\tran X)^2}$ - the term due to the interaction of the mean terms of $\Kn$ and $\Kn^2$.

Meanwhile the first denominator term decomposes as follows:
\begin{align*}
    \inner{X\tran Y \Kn Y X, X\tran Y \Kn Y X} &=  \wt{\Theta}(1) \cdot  \trace{(X\tran yy\tran X)^2}\\
    &+ \wt{O}(1) \cdot \trace{(X\tran Y \Delta_1 Y X) (X\tran yy\tran X)}\\
    &+ \wt{O}(1) \cdot \trace{(X\tran Y \Delta_1 Y X)^2}\\
\end{align*}
By a similar argument to the numerator term, we observe that the leading sub-term for the first denominator term is $\wt{\Theta}(1) \cdot \trace{(X\tran yy\tran X)^2}$.

Further, the second denominator terms decomposes as follows:
\begin{align*}
    \inner{X\tran Y \Kn^2 Y X, X\tran Y \Kn^2 Y X} &=  \wt{\Theta}(d^2) \cdot  \trace{(X\tran yy\tran X)^2}\\
    &+ \wt{O}(d^2) \cdot \trace{(X\tran Y \Delta_2 Y X) (X\tran yy\tran X)}\\
    &+ \wt{O}(d^2) \cdot \trace{(X\tran Y \Delta_2 Y X)^2}\\
\end{align*}
By a similar argument, the second denominator term has leading sub-term $\Theta(d^2) \cdot \trace{(X\tran yy\tran X)^2}$.

Putting the numerator and denominator terms together, we have the correlation has the following asymptotics:
\begin{align*}
    \corr{\frac{\dif^2}{\dif t^2} (\bar{W}\tran \bar{W}), \frac{\dif^2}{\dif t^2} (\bar{W}\tran K \bar{W})} &= \frac{ \wt{\Theta}(d)\cdot \trace{(X\tran yy\tran X)^2} + \wt{O}(d^3)}{\trace{\wt{\Theta}(1) \cdot (X\tran yy\tran X)^2 + \wt{O}(d^2)}^{1/2} \cdot \trace{\wt{\Theta}(d^2) \cdot (X\tran yy\tran X)^2 + \wt{O}(d^4)}^{1/2} } \\
    &\rightarrow 1~, 
\end{align*}
completing the proof.
\end{proof}

\begin{lemma}
\label{lemm: sub gaussian vector concentration}
Suppose $|f^*(x)| \leq c \cdot \log^\ell(d)$ with probability $1-o(1)$ for constant $c > 0$, and the data are sub-Gaussian with constant parameter $K$ and some $\ell > 0$. Then, $\|\frac{1}{n}X\tran y - \Exp{}{xf^*(x)}\| = o(1)$ w.p. $1-o(1)$, provided $n = \omega(d \log^{2\ell+1}(d))$. 
\end{lemma}
\begin{proof}
As the data $x$ are $K$-sub-Gaussian for a universal constant $K$, and the labels are bounded by $c\log^\ell(d)$, we have that $xy$ is sub-Gaussian with parameter $M=Kc\log^\ell(d)$. Therefore, the empirical expectation of $xy$, $\frac{1}{n} \sum_{i=1}^n x_i y_i$ is sub-Gaussian with parameter $M/\sqrt{n}$. 
Directly applying Lemma~1 in \citet{subgaussian_note}, we see that,
\begin{align*}
    \mathrm{Pr}\round{\norm{\frac{1}{n} \sum_{i=1}^n x_i y_i - \Exp{}{xy}} > t} < 2\exp\round{-\frac{t^2 n}{2cM^2 d}}~,
\end{align*}
for a universal constant $c>0$. Therefore, provided $n = \omega(d \log^{2\ell+1}(d))$, we have $\norm{\frac{1}{n} \sum_{i=1}^n x_i y_i - \Exp{}{xy}} = o(1)$ with probability $1-o(1)$, completing the proof.
\end{proof}

\section{Additional centerings of the NFC}
\label{app: other centerings}

\paragraph{Double-centered NFC} One may additionally center the PTK feature map to understand the co-evolution of the PTK feature covariance and the weight matrices. We consider such a centering that we refer to as the \textit{double-centered} NFC, where we measure $\corr{(\Wb^{(\ell)})\tran \Wb^{(\ell)}, (\Wb^{(\ell)})\tran \bar{K}^{(\ell)} \Wb^{(\ell)}}$, where $\bar{K}^{(\ell)} = \round{\dfdx{f(X)}{h_\ell} - \dfdx{f_0(X)}{h_\ell}}\tran \round{\dfdx{f(X)}{h_\ell} - \dfdx{f_0(X)}{h_\ell}}$, and $f_0$ is the neural network at initialization.

However, the double-centered NFC term corresponds to higher-order dynamics that do not significantly contribute the centered and uncentered NFC (Figure~\ref{fig: Unnormalized NFA}) when initialization is large or for early periods of training. Note however this term becomes relevant over longer periods of training.

\begin{figure}[h]
    \centering
    \includegraphics[scale=0.4]{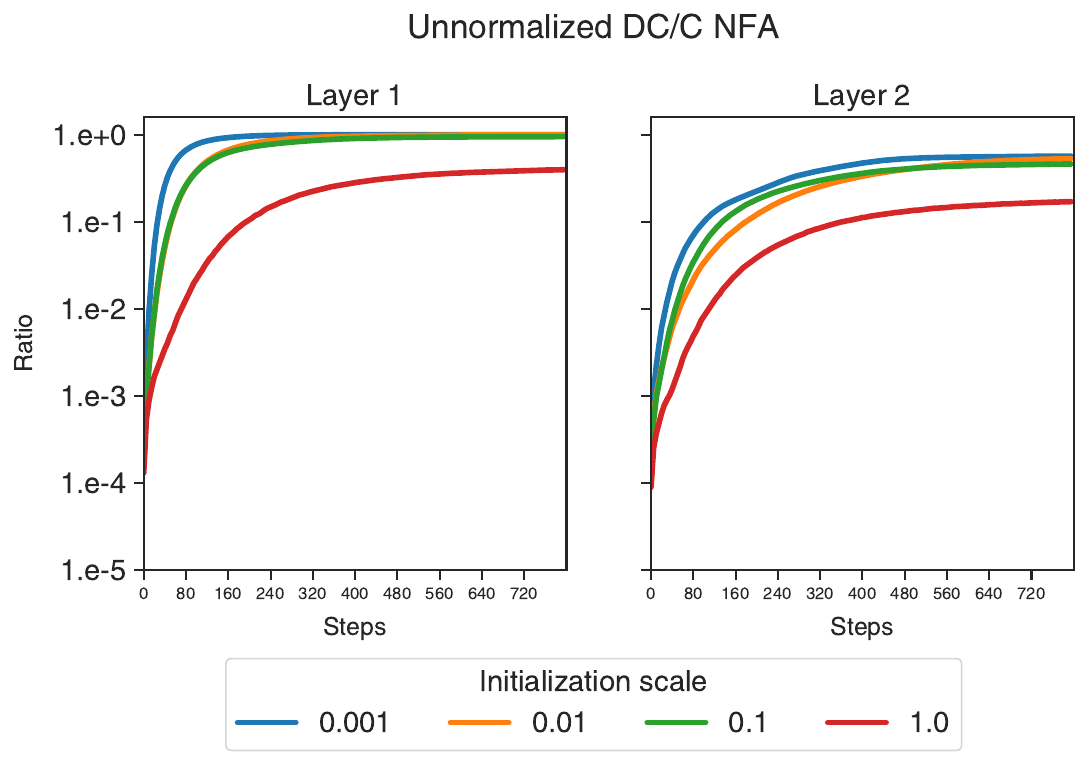}
    \caption{Ratio of the unnormalized double-centered NFC to the centered NFC throughout neural network training. In particular, we plot $\trace{\bar{W}\tran \bar{W} \bar{W}\tran \bar{K} \bar{W}} \cdot {\trace{\bar{W}\tran \bar{W} \bar{W}\tran K \bar{W}}}^{-1}$ throughout training for both layers of a two-hidden layer MLP with ReLU activations.}
    \label{fig: Unnormalized NFA}
\end{figure}

\paragraph{Isolating alignment of the PTK to the initial weight matrix} 

One may also center just the PTK feature map, while substituting the initial weights for $W$ to isolate how the PTK feature covariance aligns to the weight matrices. To measure this alignment, we consider the \textit{PTK-centered} NFC, which is defined as the correlation $\corr{(\Wi^{(\ell)})\tran \Wi^{(\ell)}, (\Wi^{(\ell)})\tran \bar{K}^{(\ell)} \Wi^{(\ell)}}~$, where $\Wi^{(\ell)}$ is the initial weight matrix at layer $\ell$.

However, this correlation decreases through training, indicating that the correlation of these quantities does not drive alignment between the uncentered NFM and AGOP (Figure~\ref{fig: PTK-centered NFA}).

\begin{figure}[h]
    \centering
    \subfloat[Isotropic data.]{\includegraphics[scale=0.5]{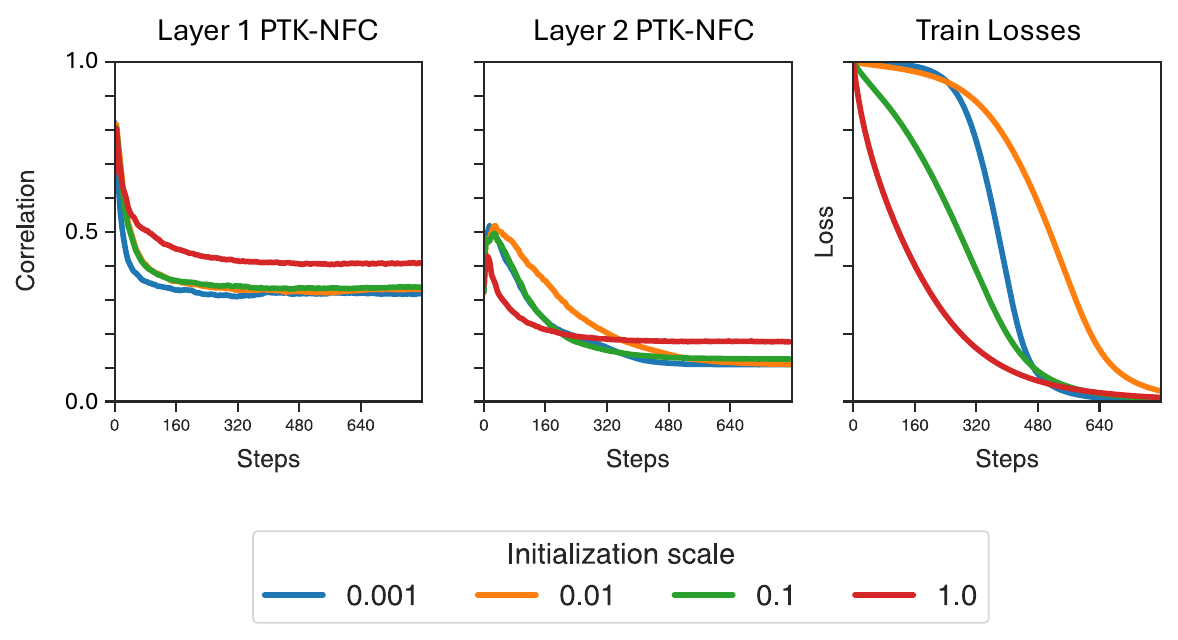}}%
    ~\\
    \subfloat[Data spectrum with decay $\lambda_k \sim \frac{1}{1+k^2}$.]{\includegraphics[scale=0.5]{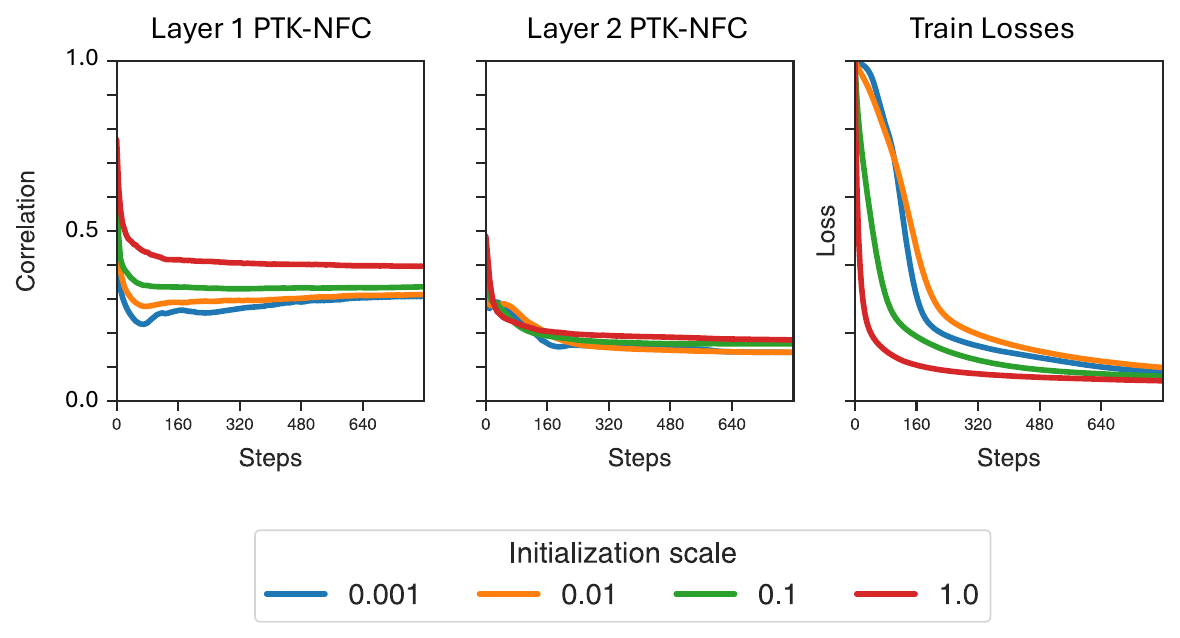}}
    \caption{PTK-centered NFC throughout training for both layers of a two-hidden layer MLP with ReLU activations on Gaussian data with two different spectra.}
    \label{fig: PTK-centered NFA}
\end{figure}

\section{Extending our theoretical predictions to depth and general activations}
\label{sec: mean predictions}

Precise predictions of the C-NFC become more complicated with additional depth and general activation functions. However, we note that the deep C-NFC will remain sensitive to a first-order approximation in which $K$ is replaced by its expectation. We demonstrate that this term qualitatively captures the behavior of the C-NFC for 2 hidden layer architectures with quadratic and, to a lesser extent, ReLU activation functions in Figure~\ref{fig: Depth calcs}. In this experiment, we sample Gaussian data with mean $0$ and covariance with a random eigenbasis. We parameterize the eigenvalue decay of the covariance matrix by a parameter $\alpha$, called the data decay rate, so that the eigenvalues have values $\lambda_k = \frac{1}{1+k^\alpha}$. As $\alpha$ approaches $0$ or $\infty$ the data covariance approaches a projector matrix. 

In this experiment, we see that the data covariance spectrum will also parameterize the eigenvalue decay of $\Exp{}{K}$, allowing us to vary how close the expected PTK matrix (and its dual, the PTK feature covariance) is from a projector, where the NFA holds exactly. We see that for intermediate values of $\alpha$, both the observed and the predicted derivatives of the \CNFC{} decreases in value. 

We plot the observed values in two settings corresponding to different asymptotic regimes. One setting is the proportional regime, where $n=k=d=128$. The other is the NTK regime where $n=d=128$ and $k=1024$. For the quadratic case, as the network approaches infinite width, the prediction more closely matches the observed values. Additional terms corresponding to the nonlinear part of $\phi'$ in ReLU networks, the derivative of the activation function, are required to capture the correlation more accurately in this case.

\begin{figure*}[h]
    \centering
    \subfloat[Quadratic.]{\includegraphics[scale=0.525]{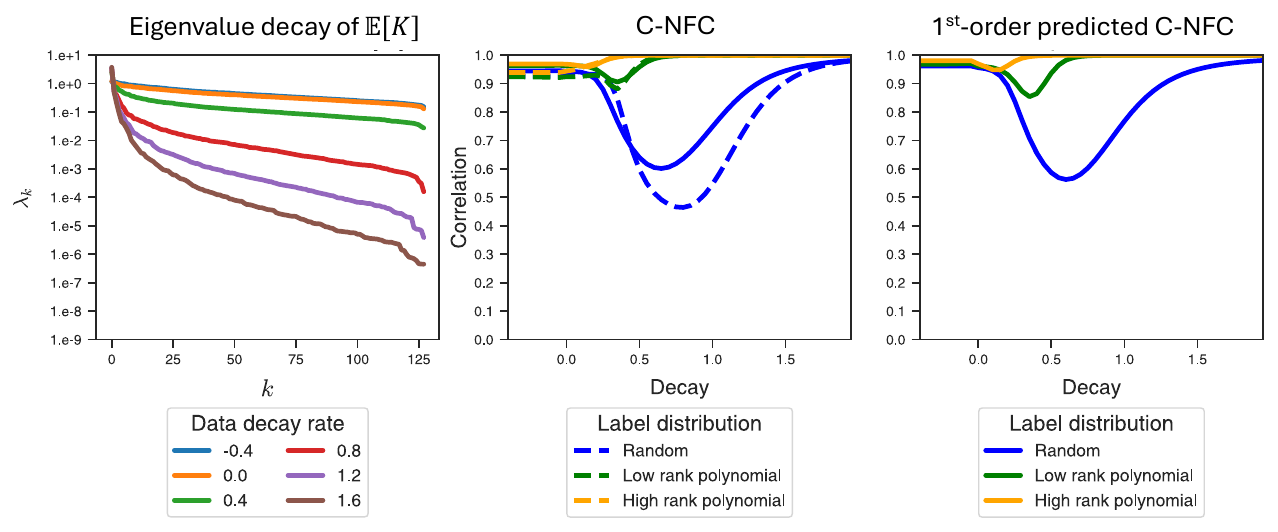}}%
    \\
    \subfloat[ReLU.]{\includegraphics[scale=0.525]{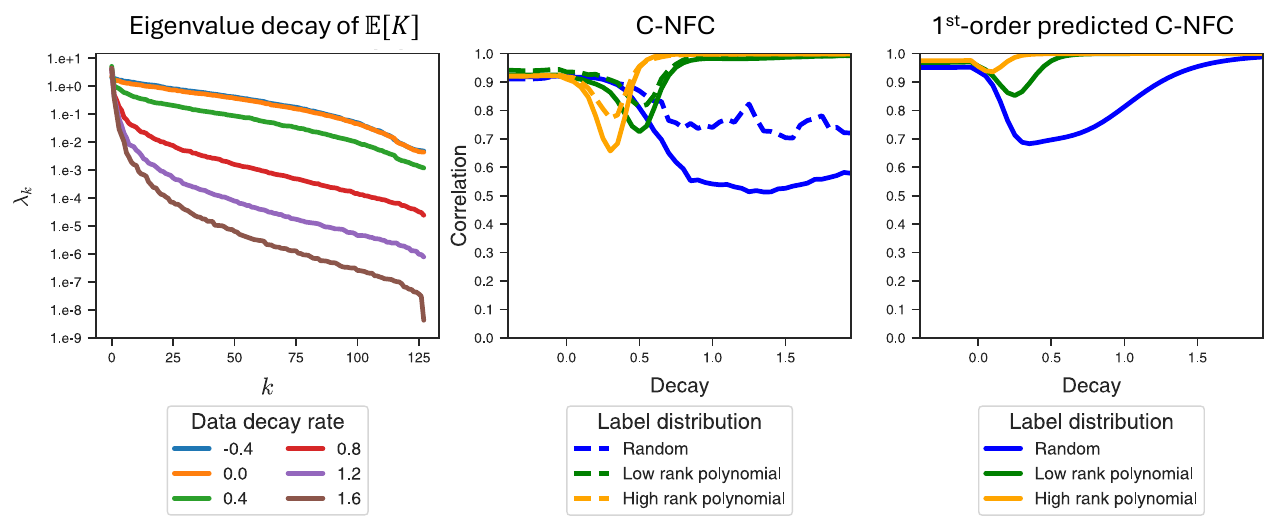}}
    \caption{Observed versus the first-order predicted C-NFC for the input to the first layer of a two hidden layer MLP. The dashed line is neural network width $k=n=d=128$, where $n$ and $d$ are the number of data point and data dimension, respectively, while the solid line uses $n=d=128$ and $k=1024$.}
    \label{fig: Depth calcs}
\end{figure*}

\section{Free probability calculations of C-NFC}
\label{app: free probability calculation}

In order to understand the development of the NFC, we analyze the centered NFC in the limit that learning rate is much smaller than the initialization for a one hidden layer MLP with quadratic activations. We write this particular network as,
\begin{align*}
    f(x) = a\tran (Wx)^2~,
\end{align*}
where $a \in \Real^{1 \times k}$ and $W \in \Real^{k \times d}$, where $d$ is the input dimension and $k$ is the width. In this case, the NFC has the following form,
\begin{align}
\label{eq: quadratic NFA}
    \corr{\NFM, \AGOP} = \frac{\trace{X\tran Y \Kn Y X X\tran Y \Kn^2 Y X}}{\trace{(X\tran Y \Kn Y X)^2}^{-1/2} \trace{(X\tran Y \Kn^2 Y X)^2}^{-1/2}}~,
\end{align}
where $\Kn = XW\tran \diag{a}^2 WX\tran$. 

We assume two properties hold in the finite dimensional case we consider, that will hold asymptotically in the infinite dimensional limit.
\begin{assumption}[Self-averaging] 
We assume that computing the average of the NFC quantities
    across initializations is equal to the quantities themselves in the high-dimensional limit.
\end{assumption}
\begin{assumption}[Asymptotic freeness] 
We assume that the collections $\{X, Y\}$ and $\{W, a\}$
    are asymptotically free with respect to the operator $\Exp{}{\tr(\cdot)}$, where
    $\tr[M] = \frac{1}{n}\sum_{i=1}^{n}M_{ii}$.
\end{assumption}

We will compute the expected values of the centered NFC under these assumptions. In the remainder of
the section we will drop the $\Exp{}{\cdot}$ in the trace for ease of notation.

\subsection{Free probability identities}

The following lemmas will be useful: let $\{\bar{C}_{i}\}$ and $\{R_{i}\}$ be freely
independent of each other with respect to $\tr$, with $\tr[\bar{C}_{i}] = 0$. Alternating words
have the following products:
\begin{equation}
\tr[\bar{C}_{1}R_{1}] = 0
\end{equation}
\begin{equation}
\tr[\bar{C}_{1}R_{1}\bar{C}_{2}R_{2}] = \tr[R_{1}]\tr[R_{2}]\tr[\bar{C}_{1}\bar{C}_{2}]
\end{equation}
\begin{equation}
\tr[\bar{C}_{1}R_{1}\bar{C}_{2}R_{2}\bar{C}_{3}R_{3}] = \tr[R_{1}]\tr[R_{2}]\tr[R_{3}]\tr[\bar{C}_{1}\bar{C}_{2}\bar{C}_{3}]
\end{equation}
\begin{equation}
\begin{split}
& \tr[\bar{C}_{1}R_{1}\bar{C}_{2}R_{2}\bar{C}_{3}R_{3}\bar{C}_{4}R_{4}]  = 
\tr[R_{1}]\tr[R_{2}]\tr[R_{3}]\tr[R_{4}]\tr[\bar{C}_{1}\bar{C}_{2}\bar{C}_{3}\bar{C}_{4}] +\\
& \tr[R_{1}]\tr[R_{3}]\tr[\bar{R}_{2}\bar{R}_{4}]\tr[\bar{C}_{1}\bar{C}_{2}]\tr[\bar{C}_{3}\bar{C}_{4}]+\tr[R_{2}]\tr[R_{4}]\tr[\bar{R}_{1}\bar{R}_{3}]\tr[\bar{C}_{2}\bar{C}_{3}]\tr[\bar{C}_{1}\bar{C}_{4}]
\end{split}
\end{equation}
where $\bar{R}_{i} \equiv R_{i}-\tr[R_{i}]$.

Applying these identities to the one hidden layer quadratic case, we use the following definitions:
\begin{equation}
R = W\tran\diag{a^{2}}W,~A = (X\tran Y X)^{2},~B = X\tran X
\end{equation}
Crucially, $R$ is freely independent of the set
$\{A, B\}$. We will also use the notation $\bar{M}$ to indicated the centered version of $M$, $\bar{M} = M-\tr[M]$.

\subsection{Numerator term of NFC}

The numerator in Equation~\eqref{eq: quadratic NFA} is
\begin{equation}
\trace{X\tran Y \Kn Y X X\tran Y \Kn^2 Y X} = \trace{A R A R B R}
\end{equation}
Re-writing $A = \bar{A}+\tr[A]$ and $B = \bar{B}+\tr[B]$ we have:
\begin{equation}
\trace{X\tran Y \Kn Y X X\tran Y \Kn^2 Y X} = \trace{(\bar{A}+\tr[A]) R (\bar{A}+\tr[A]) R (\bar{B}+\tr[B]) R}
\end{equation}
This expands to
\begin{equation}
\begin{split}
\trace{X\tran Y \Kn Y X X\tran Y \Kn^2 Y X} & = \trace{\bar{A}R\bar{A}R\bar{B}R}+2\trace{A}\trace{\bar{A}R\bar{B}R^{2}}+\trace{B}\trace{\bar{A}R\bar{A}R^{2}}\\
&\trace{A}^{2}\trace{\bar{B}R^{3}}+2\trace{A}\trace{B}\trace{\bar{A}R^{3}} + \trace{A}^{2}\trace{B}\trace{R}^{3}
\end{split}
\end{equation}
Using the identities we arrive at:
\begin{equation}
\begin{split}
\trace{X\tran Y \Kn Y X X\tran Y \Kn^2 Y X} & = \tr[A]^2\tr[B]\trace{R^3}\\
& +2\tr[A]\tr[R^2]\tr[R]\trace{\bar{A}\bar{B}}\\
& +\tr[B]\tr[R]\tr[R^2]\trace{\bar{A}^2}\\
& +\tr[R]^{3}\trace{\bar{A}^{2}\bar{B}}
\end{split}
\end{equation}

\subsection{First denominator term of NFC}

The first denominator term in Equation~\eqref{eq: quadratic NFA} is
\begin{equation}
\trace{X\tran Y \Kn Y X X\tran Y \Kn Y X} = \trace{A R  A R}
\end{equation}
This is a classic free probability product:
\begin{equation}
\trace{X\tran Y \Kn Y X X\tran Y \Kn Y X} = \tr[A^{2}]\tr[R]^{2}+\tr[A]^{2}\trace{R^2}-\tr[A]^{2}\tr[R]^{2}
\end{equation}
which can be derived from the lemmas.

\subsection{Second denominator term of NFC}

For the second denominator term of Equation~\eqref{eq: quadratic NFA} we have
\begin{equation}
\trace{X\tran Y \Kn^2 Y X X\tran Y \Kn^2 Y X} = \trace{A R B R A R B R}
\end{equation}
Expanding the first $A$ we have
\begin{equation}
\trace{X\tran Y \Kn^2 Y X X\tran Y \Kn^2 Y X} = \trace{\bar{A} R B R A R B R}+\tr[A]\trace{R^{2} B R A R B }
\end{equation}
Next we expand the first $B$:
\begin{equation}
\begin{split}
\trace{X\tran Y \Kn^2 Y X X\tran Y \Kn^2 Y X} & = \trace{\bar{A} R \bar{B} R A R B R}+\tr[B]\trace{\bar{A} R^2 A R B R}\\
& +\tr[A]\tr[B]\trace{R^{3} A R B }+\tr[A]\trace{R^{2} \bar{B} R A R B }
\end{split}
\end{equation}
The next $A$ gives us
\begin{equation}
\begin{split}
\trace{X\tran Y \Kn^2 Y X X\tran Y \Kn^2 Y X} & = \trace{\bar{A} R \bar{B} R \bar{A} R B R}+2\tr[A]\trace{\bar{A} R \bar{B} R^2 B R}+\tr[B]\trace{\bar{A} R^2 \bar{A} R B R}\\
& +2\tr[A]\tr[B]\trace{R^{3} \bar{A} R B }+\tr[A]^{2}\tr[B]\trace{R^{4} B }+\tr[A]^{2}\trace{R^{2} \bar{B} R^{2} B }
\end{split}
\end{equation}
Expanding the final $B$ we have
\begin{equation}
\begin{split}
\trace{X\tran Y \Kn^2 Y X X\tran Y \Kn^2 Y X} & = \trace{\bar{A} R \bar{B} R \bar{A} R \bar{B} R}+2\tr[B]\trace{\bar{A} R \bar{B} R \bar{A} R^2}+2\tr[A]\trace{\bar{A} R \bar{B} R^2 \bar{B} R}\\
& +4\tr[A]\tr[B]\trace{R^{3} \bar{A} R \bar{B} }+2\tr[A]\tr[B]^2\trace{R^{4} \bar{A} }+2\tr[A]^2\tr[B]\trace{R^{4} \bar{B} }\\
& +\tr[A]^{2}\tr[B]^2\tr[R^4]+\tr[A]^{2}\tr[R^{2}\bar{B}R^{2}\bar{B}]+\tr[B]^{2}\tr[R^{2}\bar{A}R^{2}\bar{A}]
\end{split}
\end{equation}
Now all terms are in the form of alternating products from the lemma.
This means we can factor out the non-zero traces of the other terms.
Simplifying we have:
\begin{equation}
\begin{split}
\trace{X\tran Y \Kn^2 Y X X\tran Y \Kn^2 Y X} & = \tr[R]^{4}\trace{(\bar{A}\bar{B})^{2}}+2\tr[R]^{2}(\tr[R^2]-\tr[R]^2)\tr[\bar{A}\bar{B}]^{2}\\
&+2\tr[R]^{2}\tr[R^2]\left(\tr[B]\trace{\bar{A}^{2}\bar{B}}+\tr[A]\trace{\bar{A}\bar{B}^{2}}\right)\\
& +4\tr[A]\tr[B]\tr[R^{3}]\tr[R]\trace{\bar{A}\bar{B}}+\tr[A]^{2}\tr[B]^2\tr[R^4]\\
& +\tr[A]^{2}\tr[\bar{B}^{2}]\tr[R^{2}]^{2}+\tr[\bar{A}]^{2}\tr[B^{2}]\tr[R^{2}]^{2}
\end{split}
\end{equation}

All terms of the NFC are now in terms of traces of the matrices $A$, $B$, and $R$ and functions on each term separately. The matrices $A$ and $B$ are determined by the data, while the moments of the eigenvalues of $R$ are determined by the initialization distribution of the weights in the neural network, and neither training nor the data.

\section{Alignment reversing dataset}
\label{app: balance dataset}

The data consists of a mixture of two distributions from which two subsets of the data $X^{(1)}$ and $X^{(2)}$ are sampled from, and is parametrized by a balance parameter $\gamma \in (0,1]$ and two variance parameters $\epsilon_1, \epsilon_2 > 0$. The subset $X^{(1)}$ which has label $y_1=1$ and constitutes a $\gamma$ fraction of the entire dataset, is sampled from a multivariate Gaussian distribution with mean $0$ and covariance $\Sigma = \one\one\tran + \epsilon_1 \cdot I~.$ Then the second subset, $X^{(2)}$, is constructed such that $(X^{(2)})\tran X^{(2)} \approx ((X^{(1)})\tran X^{(1)})^{-2}$, and has labels $y_2 = 0$. Then, for balance parameter $\gamma$ sufficiently small, the AGOP second derivative approximately satisfies,
\begin{align*}
    \Exp{}{\dW\tran K \dW} &\sim X\tran Y X X\tran X X\tran Y X = (X^{(1)})\tran X^{(1)} X\tran X (X^{(1)})\tran X^{(1)} \approx (X^{(1)})\tran X^{(1)} (X^{(2)})\tran X^{(2)} (X^{(1)})\tran X^{(1)}\\ &\approx I~,
\end{align*}
In contrast, the NFM second derivative, $\Exp{}{\dW\tran \dW} = (X\tran Y X)^2 = ((X^{(1)})\tran X^{(1)})^2 \approx \Sigma^2$, will be significantly far from identity.

Specifically, we construct $X^{(2)}$ by the following procedure:
\begin{enumerate}
    \item Extract singular values $S_1$ and right singular vectors $U_1$ from a singular-value decomposition (SVD) of $(X^{(1)})\tran X^{(1)}$. 
    \item Extract the left singular vectors $V_2$ from a sample $\tilde{X}_2$ that is sampled from the same distribution as $X^{(1)}$.
    \item Construct $X^{(2)} = V_2 S_1^{-1} U_1\tran$.
    \item Where $X = X^{(1)} \oplus X^{(2)}$, Set $X \leftarrow X + \epsilon_2 Z$, where $Z \sim \mathcal{N}(0, I)$.
    \item Set $y \leftarrow y + 10^{-5}\cdot \one$.
\end{enumerate}

Note that $U_1 S_1^{-1} V_2\tran V_2 S_1^{-1} U_1\tran = U_1 S_1^{-2} U_1\tran = ((X^{(1)})\tran X^{(1)})^{-2}$, therefore, we should set $X^{(2)} = V_2 S_1^{-1} U_1\tran$ to get $(X^{(2)})\tran X^{(2)} = ((X^{(1)})\tran X^{(1)})^{-2}$. Regarding the variance parameters, in practice we set $\epsilon_1 = 0.5$ and $\epsilon_2 = 10^{-2}$.

\begin{proposition}[Expected NFM and AGOP]
\label{prop: expected nfm for quadratic}
For a one hidden layer quadratic network, $f(x) = a\tran (Wx)^2$, with $a \sim \mcN(0,I)$ and $W \sim \frac{1}{\sqrt{k}}  \cdot \mcN(0,I)$,
\begin{align*}
    \Exp{a, W}{\dW\tran \dW} = (X\tran Y X)^2~,
\end{align*}
and,
\begin{align*}
\Exp{a, W}{\dW\tran K \dW} &= 3 \cdot \trace{X\tran X} \cdot (X\tran Y X)^2\\ 
&+ 6 X\tran Y X X\tran X X\tran Y X
\end{align*}
\end{proposition}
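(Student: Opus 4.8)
The plan is to reduce the statement to two Gaussian moment computations by inserting the closed forms for $\dW\tran\dW$ and $\dW\tran K\dW$ at initialization. By the proof of Proposition~\ref{prop: Centered NFA dynamics}, for MSE loss with zero initial output we have $\dW\tran\dW = X\tran Y\Kn YX$ and $\dW\tran K\dW = X\tran Y\Kn^2 YX$, where for the quadratic network $f(x)=a\tran(Wx)^2$ the pre-activation gradient is $\dfdx{f(x)}{h} = 2\diag{a}Wx$, so that the PTK matrix factors as $\Kn = X\Rmat X\tran$ with $\Rmat = W\tran\diag{a^2}W$. Since $X$ and $Y$ are deterministic, linearity of expectation moves the average onto $\Kn$ and $\Kn^2$, so it suffices to compute $\Exp{a,W}{\Rmat}$ and $\Exp{a,W}{\Rmat\Bmat\Rmat}$ with $\Bmat = X\tran X$.

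For the first identity I would compute $\Exp{}{\Rmat}$ entrywise. Using independence of $a$ and $W$ together with $\Exp{}{a_i^2}=1$ and $\Exp{}{W_{ij}W_{ip}} = \tfrac1k\delta_{jp}$, every entry reduces to $\sum_i \Exp{}{a_i^2}\Exp{}{W_{ij}W_{ip}} = \delta_{jp}$, giving $\Exp{}{\Rmat}=I$ and hence $\Exp{}{\Kn}=XX\tran$. Substituting, $\Exp{}{\dW\tran\dW} = X\tran Y(XX\tran)YX = (X\tran YX)^2$, which is the first claim.

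The second identity is the crux. Writing $\Kn^2 = X\Rmat\Bmat\Rmat X\tran$ reduces the problem to the fourth-order moment $\Exp{}{\Rmat\Bmat\Rmat}$. I would expand $(\Rmat\Bmat\Rmat)_{jl} = \sum_{p,q}B_{pq}\,\Exp{}{R_{jp}R_{ql}}$ and evaluate $\Exp{}{R_{jp}R_{ql}}$ by Wick's theorem on the Gaussian entries of $W$ combined with the moments of $a$, splitting according to whether the two neuron indices coincide: the coincident case carries $\Exp{}{a_i^4}=3$ and the three Isserlis pairings of the four $W$-factors, while the distinct case carries $\Exp{}{a_i^2 a_{i'}^2}=1$ and a single pairing. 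The two pairings that route a $W$-index into a $B$-index collapse to the structure $\Bmat_{jl}$, while the pairing contracting $B$ internally produces $\trace{\Bmat}\delta_{jl}$; collecting these yields $\Exp{}{\Rmat\Bmat\Rmat} = 6\Bmat + 3\trace{\Bmat}I$, where the factor $3$ descends from $\Exp{}{a_i^4}=3$ and the additional factor of $2$ on the $\Bmat$-term comes from the two symmetric $W$-pairings. Substituting back through $\Exp{}{\Kn^2}=X\Exp{}{\Rmat\Bmat\Rmat}X\tran$ gives $\Exp{}{\dW\tran K\dW} = (X\tran YX)\round{6\Bmat+3\trace{\Bmat}I}(X\tran YX) = 6\,X\tran YX\,X\tran X\,X\tran YX + 3\trace{X\tran X}(X\tran YX)^2$, as claimed.

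The main obstacle is the fourth-moment bookkeeping in $\Exp{}{\Rmat\Bmat\Rmat}$: one must correctly enumerate the Wick contractions of the four $W$-entries, keep the neuron-index coincidence cases separate since they carry different $a$-moments, and verify that the surviving contractions collapse to precisely the two invariant structures $\Bmat$ and $\trace{\Bmat}I$ with the right multiplicities. A secondary point is that the precise constants depend on how the normalization of $W$ interacts with the sum over neurons, so one should track this scaling carefully; once $\Exp{}{\Rmat\Bmat\Rmat}$ is in hand, checking that the deterministic $X$ and $Y$ factors reassemble into the $(X\tran YX)$ blocks is immediate.
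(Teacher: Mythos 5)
Your proposal follows the same route as the paper's proof: reduce both identities to the Gaussian moments $\Exp{}{\Kn}$ and $\Exp{}{\Kn^2}$, get the first identity from $\Exp{}{\Rmat}=I$ computed entrywise, and evaluate $\Exp{}{\Rmat\Bmat\Rmat}$ by Wick's theorem with a split over coincident versus distinct neuron indices. The first identity is handled correctly and exactly as in the paper (modulo the factor of $4$ from $\phi'(h)=2h$, which you and the paper both silently absorb into the definition of $\Kn$).

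For the second identity there is a concrete gap, and in your write-up it surfaces as an internal inconsistency: you correctly note that the distinct-index case ($i\neq i'$) carries $\Exp{}{a_i^2 a_{i'}^2}=1$ and a single pairing $\delta_{jp}\delta_{ql}$, which contracts $\Bmat_{pq}$ to the structure $\Bmat_{jl}$ --- but this contribution then disappears from your collected answer $\Exp{}{\Rmat\Bmat\Rmat}=6\Bmat+3\trace{\Bmat}I$, which consists only of the coincident-index terms. Moreover, under the stated normalization $W_{ij}\sim\mathcal{N}(0,1/k)$, the coincident terms carry a factor $k\cdot k^{-2}=k^{-1}$, giving $\tfrac{6}{k}\Bmat+\tfrac{3}{k}\trace{\Bmat}I$, while the distinct terms you enumerate give $\tfrac{k(k-1)}{k^2}\Bmat$; so the term that vanishes from your final expression is in fact the dominant one, and careful bookkeeping yields $\round{1+\tfrac{5}{k}}\Bmat+\tfrac{3}{k}\trace{\Bmat}I$, which matches the proposition's constants only at $k=1$. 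To be fair, the paper's own proof contains exactly the same omission --- it passes from $\sum_{s_1,s_2}$ to a single sum $\sum_s$ without comment and does not carry the $1/k$ variance factors through --- so you have faithfully reproduced the published derivation, gap included. A fully rigorous version would either retain the cross terms and state the $k$-dependent result, or say explicitly at the outset that the diagonal ($i=i'$) contribution and the overall scaling are what is being computed, since only the two matrix structures $(X\tran YX)^2$ and $X\tran YX\,X\tran X\,X\tran YX$, not their coefficients, are used downstream in the alignment-reversing construction.
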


\begin{proof}[Proof of Proposition~\ref{prop: expected nfm for quadratic}]
\begin{align*}
    \Exp{}{\dW\tran \dW} &= X\tran Y X \Exp{}{W_0\tran \diag{a}^2 W_0} X\tran Y X\\
    &= (X\tran Y X)^2~.
\end{align*}
Further,
\begin{align*}
    \Exp{}{\dW\tran K \dW} &= X\tran Y \Exp{}{K^2} Y X~.
\end{align*}
We note that,
\begin{align}
    K^2 &= W_0\tran \diag{a}^2 W_0 X\tran X W_0\tran \diag{a}^2 W_0 \\
    &= \sum_{s_1, s_2}^k \sum_{\alpha}^n \sum_{p_1, p_2}^d\\
    &a_{s_1}^2 a_{s_2}^2 W_{s_1,p_1} X_{\alpha, p_1} X_{\alpha,p_2} W_{s_2,p_2} X W_{s_1} W_{s_2}\tran X\tran~.
\end{align}
Therefore, applying Wick's theorem, element $i,j$ of $K^2$ satisfies,
\begin{align*}
    \Exp{}{K^2_{ij}} &= \sum_{s}^k \sum_{\alpha}^n \sum_{p_1, p_2}^d \Exp{}{a_{s}^4 W_{s,p_1} X_{\alpha,p_1} X_{\alpha,p_2} W_{s,p_2} X_i\tran W_{s} W_{s}\tran X_j} = \sum_{s}^k \sum_{\alpha}^n \sum_{p_1, p_2, q_1, q_2}^d\\
    &\Exp{}{a_{s}^4 W_{s,p_1} W_{s,p_2} W_{s, q_1} W_{s, q_2} X_{\alpha,p_1} X_{\alpha,p_2} X_{i,q_1} X_{j,q_2}}\\
    &= 3 \sum_{s}^k \sum_{\alpha}^n \sum_{p_1, p_2, q_1, q_2}^d\\
    &\Big(\Exp{}{W_{s,p_1} W_{s,p_2}}\Exp{}{W_{s, q_1} W_{s, q_2}} + \\
    &\Exp{}{W_{s,p_1} W_{s,q_1}}\Exp{}{W_{s, p_2} W_{s, p_2}} +\\
    &\Exp{}{W_{s,p_1} W_{s,q_2}}\Exp{}{W_{s, p_2} W_{s, q_1}} \Big)\\
    &\cdot X_{\alpha,p_1} X_{\alpha,p_2} X_{i,q_1} X_{j,q_2}\\
    &= 3 \sum_{\alpha}^n \sum_{p_1, p_2, q_1, q_2}^d\\
    &\Big(\delta_{p_1 p_2} \delta_{q_1 q_2} + \delta_{p_1 q_1} \delta_{p_2 q_2} + \delta_{p_1 q_2} \delta_{p_2 q_1}\Big)\\ 
    &\cdot X_{\alpha,p_1} X_{\alpha,p_2} X_{i,q_1} X_{j,q_2}\\
    &= 3\sum_{\alpha}^n \Big(\sum_{p_1, q_1}^d X_{\alpha,p_1} X_{\alpha,p_1} X_{i,q_1} X_{j,q_1}\\ 
    &+ \sum_{p_1, p_2}^d X_{\alpha,p_1} X_{\alpha,p_2} X_{i,p_1} X_{j,p_2}\\ 
    &+ \sum_{p_1, p_2}^d X_{\alpha,p_1} X_{\alpha,p_2} X_{i,p_2} X_{j,p_1}\Big)\\
    &= 3 \cdot \trace{X\tran X} \cdot X_i\tran X_j + 3 \sum_{\alpha}^n X_{\alpha}\tran X_i X_{\alpha}\tran X_{j}\\
    &+ 3\sum_{\alpha}^n X_{\alpha}\tran X_j X_{\alpha}\tran X_{i}\\
    &= 3 \cdot \trace{X\tran X} \cdot X_i\tran X_j + 3 X_i X\tran X X_{j} + 3 X_j X\tran X X_{i}~.
\end{align*}
Finally, we conclude,
\begin{align*}
    \Exp{}{K^2} = 3 \round{\trace{X\tran X} X X\tran + 2 XX\tran X X\tran}~,
\end{align*}
giving the second statement of the proposition.
\end{proof}

\section{Varying the data distribution}
\label{app: nfa, vary data spectrum}
We verify that our observations for isotropic Gaussian data hold even when the data covariance has a significant spectral decay. (Figures \ref{fig: NFA measurements decay 1} and \ref{fig: NFA measurements decay 2}). We again consider Gaussian data that is mean $0$ and where the covariance is constructed from a random eigenbasis. In Figure~\ref{fig: NFA measurements decay 1}, we substitute the eignevalue decay as $\lambda_k \sim \frac{1}{1+k}$, while in Figure~\ref{fig: NFA measurements decay 2}, we use $\lambda_k \sim \frac{1}{1+k^2}$. We plot the values of the \UCNFC{}, \CNFC{}, train loss, and test loss throughout training for the first and second layer of a two hidden layer network with ReLU activations, while additionally varying initialization scale. Similar to Figure~\ref{fig: NFA measurements decay 0}, we observe that the \CNFC{} is more robust to the initialization scale than the \UCNFC{}, and \UCNFC{} value become high through training, while being small at initialization. We see that the test loss improves for smaller initializations, where the value of the \CNFC{} and \UCNFC{} are higher.

\begin{figure}[h]
    \centering
    \includegraphics[scale=0.5]{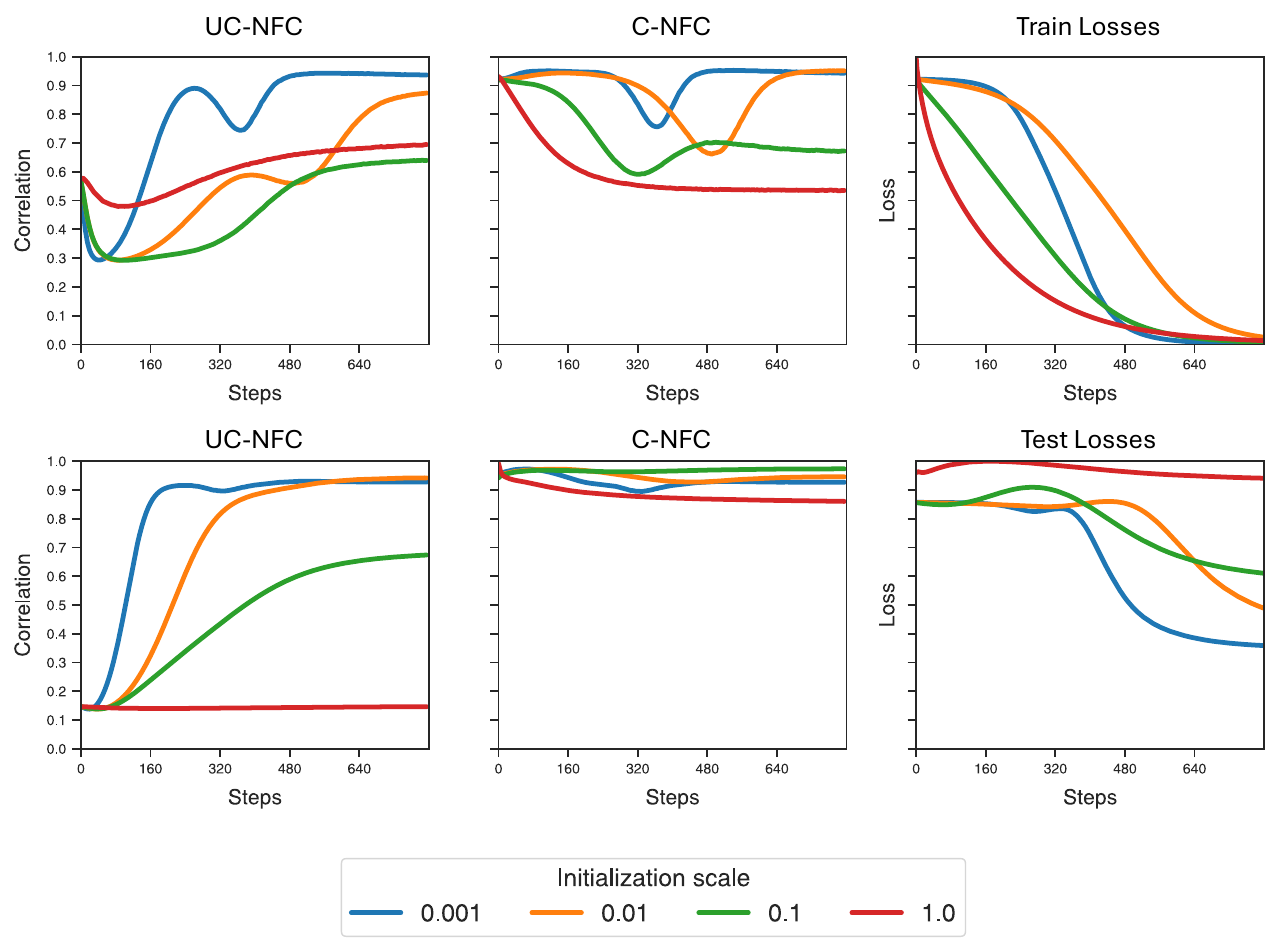}
    \caption{Centered neural feature correlations. Data covariance decay rate $\lambda_k = 1$. Top row is layer 1, bottom row is layer 2. Train (test) losses are scaled by the maximum train (test) loss achieved so that they are between $0$ and $1$.}
    \label{fig: NFA measurements decay 0}
\end{figure}

\begin{figure}[h]
  \centering
    \subfloat{\includegraphics[scale=0.5]{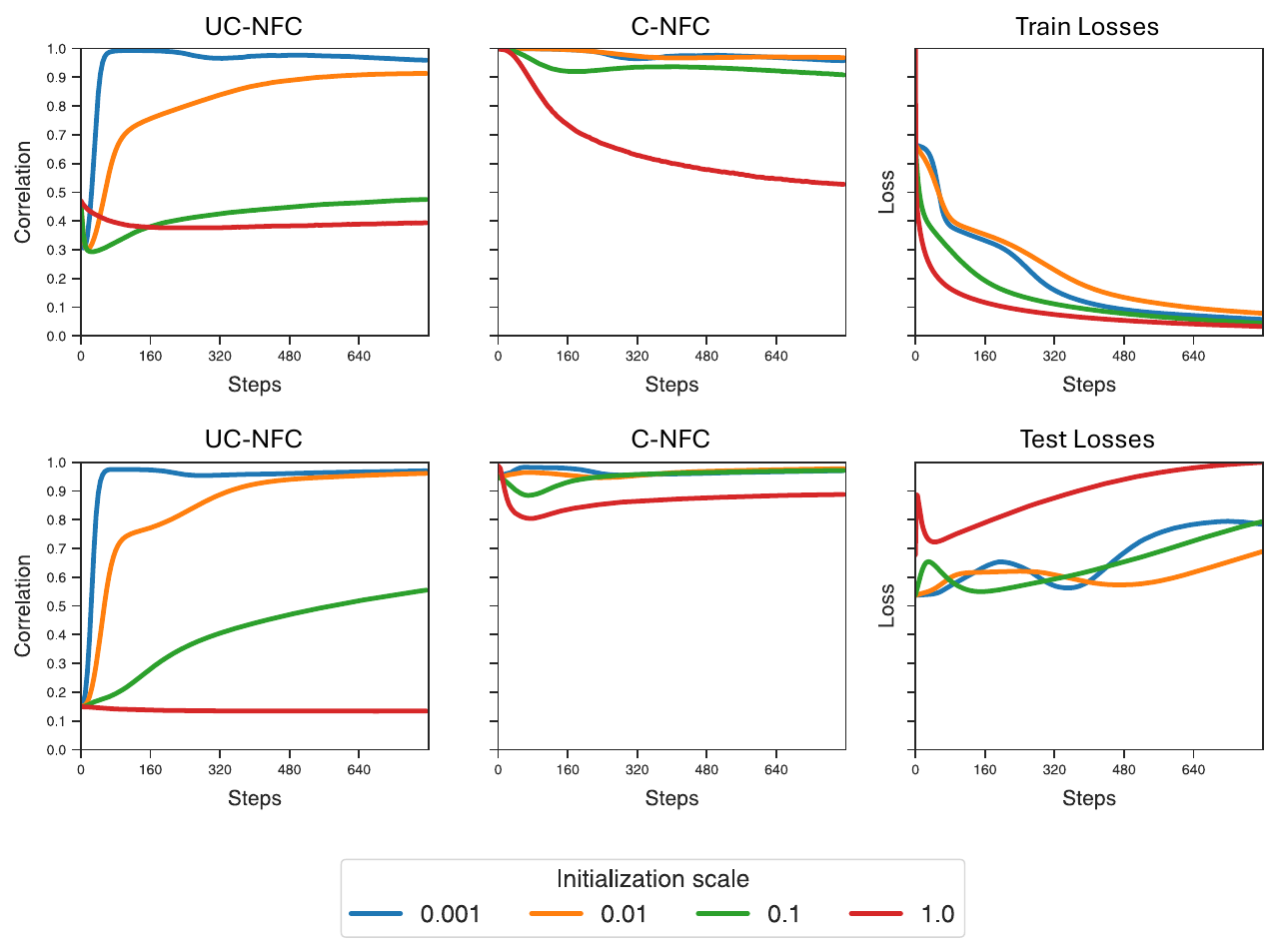}}
    \caption{Centered neural feature correlations. Data covariance decay rate $\lambda_k \sim \frac{1}{1+k}$. Top row is layer 1, bottom row is layer 2. Train (test) losses are scaled by the maximum train (test) loss achieved so that they are between $0$ and $1$.}
    \label{fig: NFA measurements decay 1}
\end{figure}

\begin{figure}[h]
    \centering
    \subfloat{\includegraphics[scale=0.5]{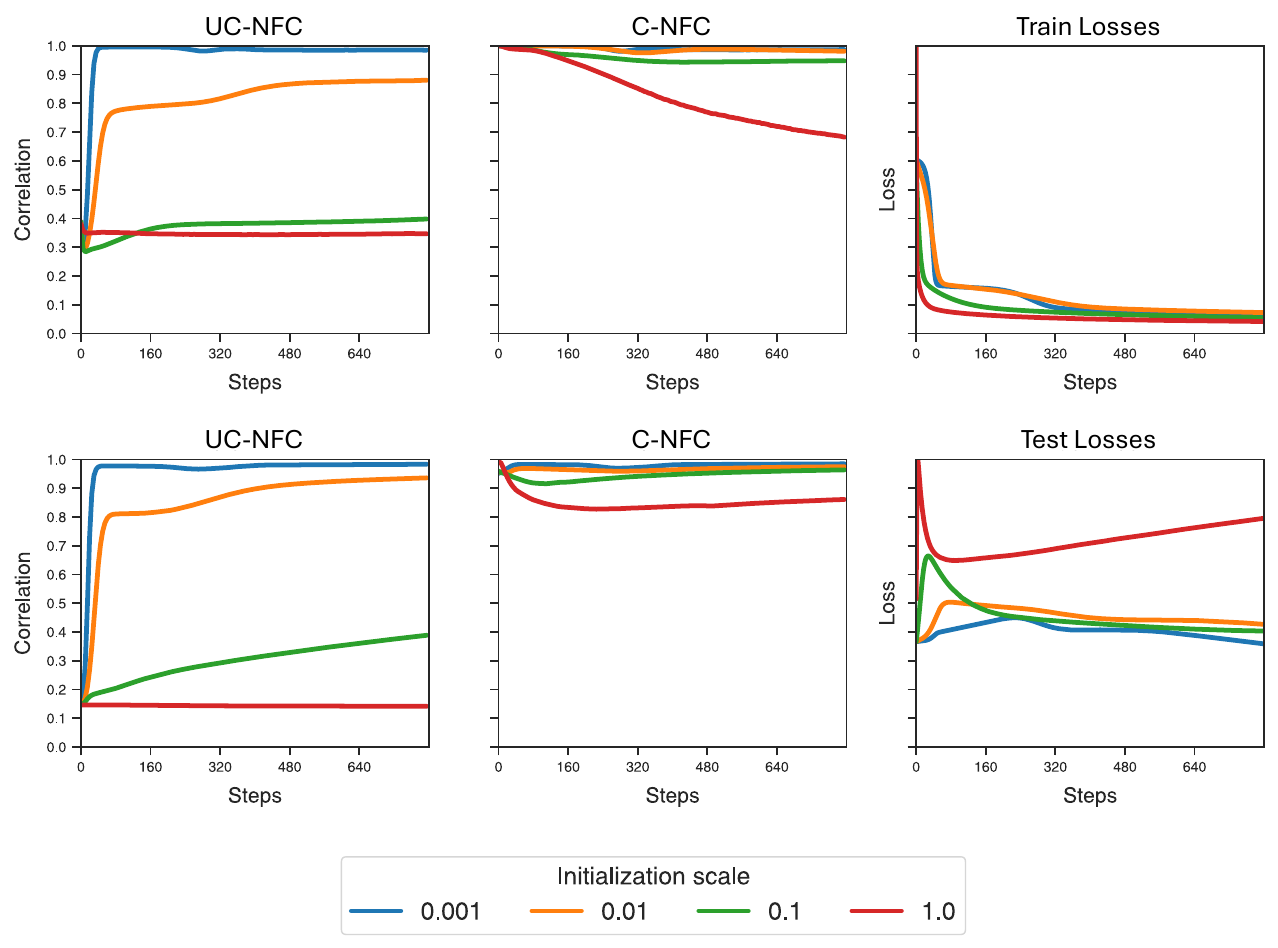}}
    \caption{Centered neural feature correlations. Data covariance decay rate $\lambda_k \sim \frac{1}{1+k^2}$. Top row is layer 1, bottom row is layer 2. Train (test) losses are scaled by the maximum train (test) loss achieved so that they are between $0$ and $1$.}
    \label{fig: NFA measurements decay 2}
\end{figure}

\clearpage

\section{Effect of initialization on feature learning}
\label{app: initialization and feature learning}

We see that when initialization is small, the C-NFC and UC-NFC are high at the end of training with and without fixing the learning speed (Figure~\ref{fig: NFA measurements, speed fixing}). This is reflected by the quality of the features learned by the NFM and the qualitative similarity of the NFM and AGOP at small initialization scale (Figure~\ref{fig: Feature visualization, varying init}). Further, we notice that as we increase initialization, without fixing speeds, the correspondence between the NFM and decreases and the quality of the NFM features decreases (at a faster rate than the AGOP). Strikingly, when learning speeds are fixed, the quality of the features in the AGOP and NFM becomes invariant to the initialization scale. 

\begin{figure*}[h]
    \centering
    \subfloat[NFM, speeds not fixed.]{\includegraphics[scale=0.45]{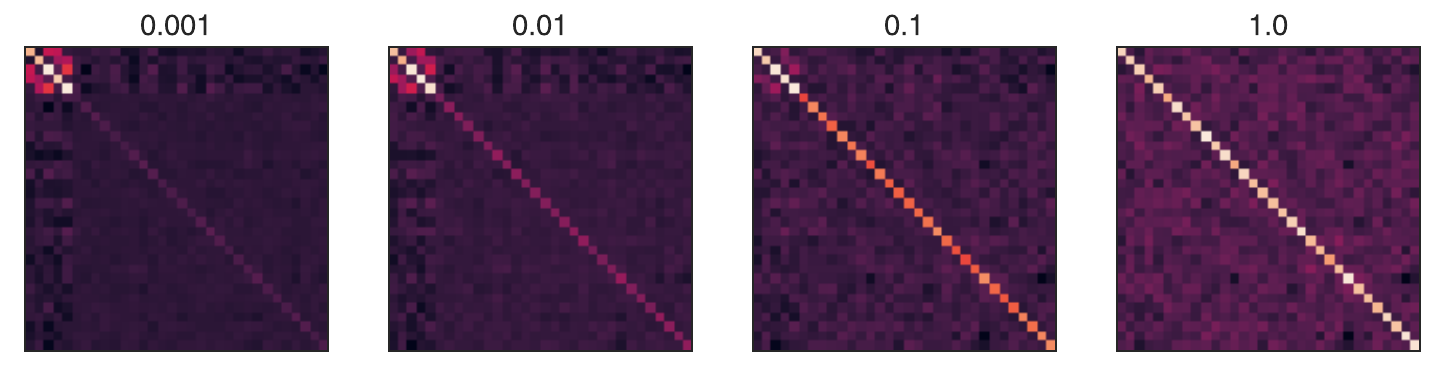}}%
    ~\\
    \subfloat[AGOP, speeds not fixed.]{\includegraphics[scale=0.45]{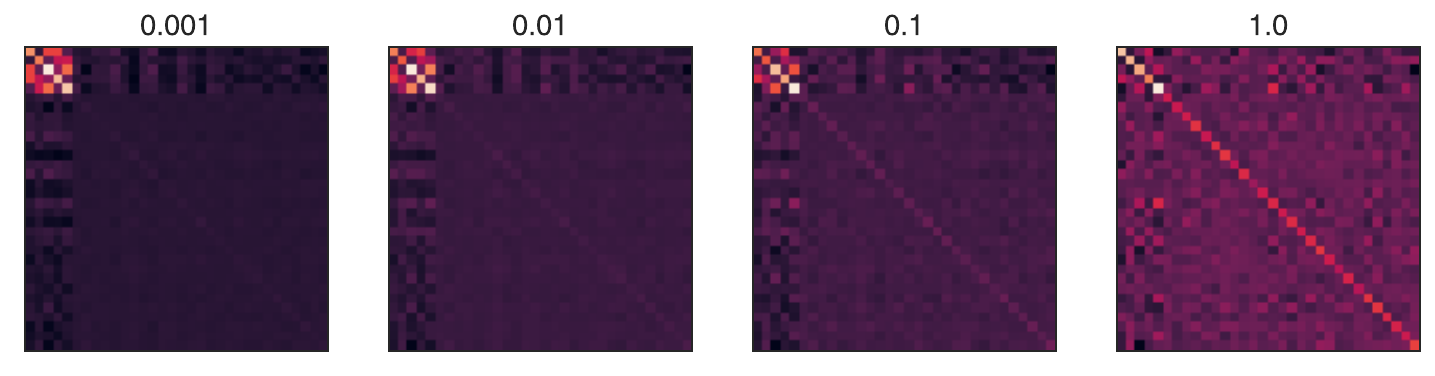}}%
    ~\\
    \subfloat[NFM, speeds fixed.]{\includegraphics[scale=0.45]{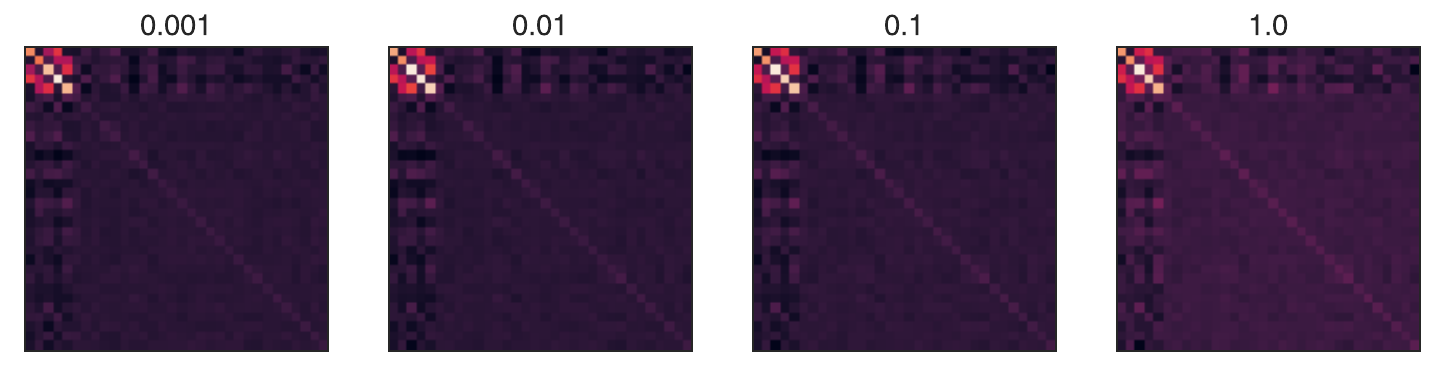}}%
    ~\\
    \subfloat[AGOP, speeds fixed.]{\includegraphics[scale=0.45]{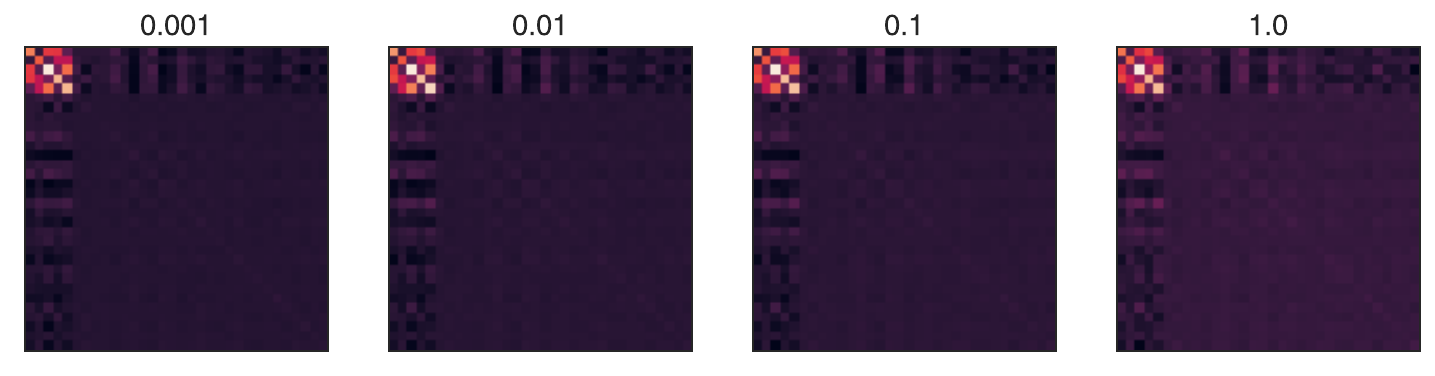}}
    \caption{The NFM at the end of training as a function of initialization scale in the first layer weights, with and without fixing learning speeds. The task again is the chain monomial with rank $r=5$. The title of each plot is the initialization scale of the first layer $s_0$. }
    \label{fig: Feature visualization, varying init}
\end{figure*}

\section{Experimental details}

We describe the neural network training and architectural hyperparameters in the experiments of this paper. Biases were not used for any networks. Further, in all polynomial tasks, we scaled the label vector to have standard deviation $1$.

\paragraph{Corrupted AGOP} For the experiments in Figure~\ref{fig: Corrupted Ansatz}, we used $n=384$ data points, $d=32$, $k=128$ as the width in all layers, isotropic Gaussian data, initialization scale $0.01$ in the first layer and default scale in the second. We used ReLU activations and two hidden layers. For the experiments in Figure~\ref{fig: NFA measurements decay 0},\ref{fig: NFA measurements decay 1},\ref{fig: NFA measurements decay 2},\ref{fig: Unnormalized NFA}, and \ref{fig: PTK-centered NFA}, we used a two hidden layer network with ReLU activations, learning rate $0.05$, 800 steps of gradient decent, and took correlation/covariance measurements every 5 steps. 

\paragraph{C/UC-NFC calculations on real datasets} We describe the experimental details for Figure~\ref{fig: real NFA measurements}. For (A,B) we trained a five layer MLP on the first 50,000 datapoints of \textit{Streetview House Numbers} (SVHN), CIFAR-10, CIFAR-100, STL-10, MNIST, and \textit{German Traffic Sign Recognition Benchmark} (GTSRB) datasets. We used the default PyTorch initialization (scale of $1$) for all layers. We used width 256 in all layers, and trained with SGD with batch size 128. For SVHN and CIFAR, we trained for 150 epochs with learning rate 0.2. For STL-10 and GTSRB we used learning rate 0.1 for 150 epochs. For MNIST we trained for 50 epochs with learning rate 10. For the VGG-11 experiments on CIFAR-10, we used the default architecture from \texttt{torchvision} with batch-norm layers removed. We trained for 50 epochs and learning rate 1. For experiments with a GPT-family model, we adapted the model and dataset from NanoGPT 
(\url{https://github.com/karpathy/nanoGPT}). We used all the default settings and the default Adam optimizer, but used no weight decay, learning rate 5e-3, and removed all dropout layers. We also reduced the number of attention layers to 3 from the original 6. We trained on the Shakespeare characters dataset. 

\paragraph{Alignment reversing dataset} For the experiments in Figure~\ref{fig: Predicted NFA calcs}, we used $k=n=d=1024$ for the width, dataset size, and input dimension, respectively. Further, the traces of powers of $\Rmat$ are averaged over 30 neural net seeds to decouple these calculated values from the individual neural net seeds. The mean value plotted in the first two squares of figure is computed over 10 data seeds. 

\paragraph{\OptNameAcronym{} experiments} For the \OptNameAcronym{} figures (Figures~\ref{fig: Feature visualization, varying init}, \ref{fig: NFA measurements, speed fixing}), we use isotropic Gaussian data, 600 steps of gradient descent. The learning rates are chosen based on initialization scale in the first layer. For initialization scales 1, 0.1, 0.01, and 0.001, we used learning rates $0.03$, $0.1$, $0.2$, $0.4$, respectively. We again used two hidden layers with ReLU activations. We chose $n = 256$, $d=32$, and $k=256$ as the width. We divided the linear readout weights by $0.01$ at initialization to promote feature learning, and
modified \OptNameAcronym{} to scale gradients by $(\epsilon + \|\grad\Lo\|)^{-1}$, rather than just the inverse of the norm of the gradient, for $\epsilon = 0.1$. This technique smooths the training dynamics as the parameters approach a loss minimum, allowing the network to interpolate the labels.

\paragraph{Predictions with depth} For the Deep C-NFC predictions (Figure~\ref{fig: Depth calcs}), we used $n=128$, $d=128$, initialization scale of $1$. The low rank task is just the chain monomial of rank $r=5$. The high rank polynomial task is $y(x) = \sum_{i=1}^d (Qx)_i^2$, where $Q \in \Real^{d \times d}$ is a matrix with standard normal entries.

\paragraph{Figures~\ref{fig: Corrupted Ansatz, CelebA} and \ref{fig: SLO experiments, SVHN}} For the experiments on the SVHN dataset, we train a four hidden layer neural network with ReLU activations, initialization scale $1.0$ in all layers and width $256$. For SVHN, we subset the dataset to 4000 points. We train for 3000 epochs with learning rate 0.2 for standard training, and 0.3 for \OptNameAcronym{}, and take NFC measurements every 50 epochs. For \OptNameAcronym{}, we set $C_0 = 2.5$, $C_1=C_2=0.4$, and relaxation parameter $\epsilon = 0.2$. We pre-process the dataset so that each pixel is mean $0$ and standard deviation $1$. For the experiments on CelebA, we train a two hidden layer network on a balanced subset of 7500 points with Adam with learning rate $0.0001$ and no weight decay. We use initialization scale 0.02 in the first layer, and width 128. We train for 500 epochs. We pre-process the dataset by scaling the pixel values to be between 0 and 1. 

\paragraph{Code availability} We make the code available for the experiments in Figure~\ref{fig: real NFA measurements} available at \url{https://anonymous.4open.science/r/centered_NFA-4795/} (for MLP+VGG).

\newpage
\section{Additional \OptNameAcronym{} experiments}
\label{app: more SLO experiments}

We demonstrate the \OptNameAcronym{} can be applied adaptively to increase the strength of the \UCNFC{} in all layers of a deep network on the chain monomial task of rank $r=3$. We train a three hidden layer MLP with ReLU activations and an initialization scale of $0.1$ by \OptNameAcronym{}, and find that all layers finish at the same high \UCNFC{} (Figure~\ref{fig: Additional SLO Experiments}). Further, this final \UCNFC{} value is higher than the highest \UCNFC{} achieved by any layer with standard training. The generalization loss is also lower with \OptNameAcronym{} on this example, corresponding to better feature learning (through the \UCNFC{}).

\begin{figure*}[h]
  \centering
    \subfloat{\includegraphics[scale=0.625]{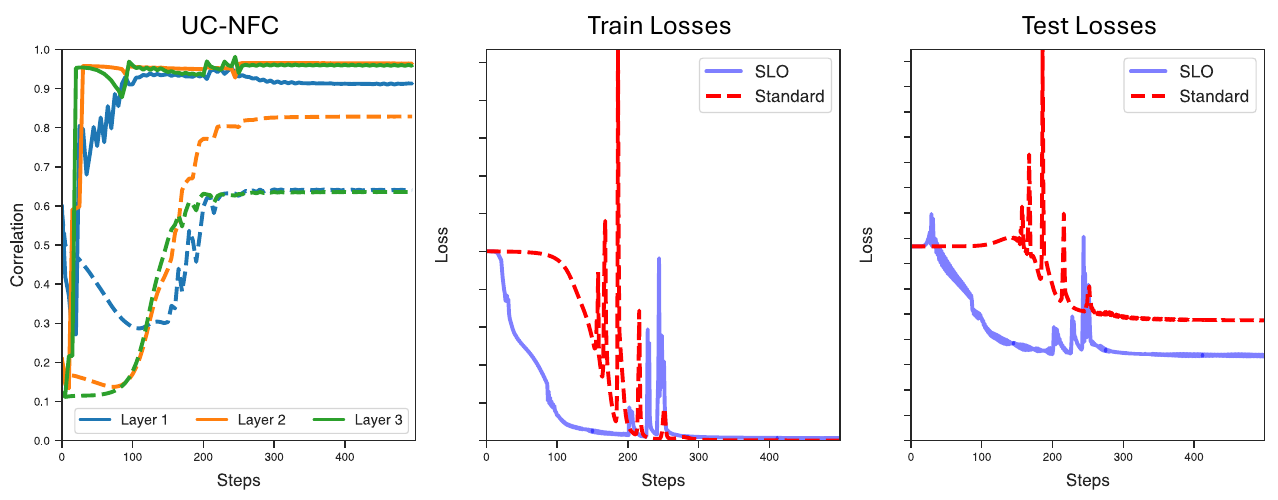}}%
    \caption{Training with \OptNameAcronym{} where the learning speeds are chosen adaptively based on the \UCNFC{} values of all layers. The dashed lines correspond to training with standard GD.}
    \label{fig: Additional SLO Experiments}
\end{figure*}

At every time step we choose $C_i = s$ for the layer $i$ with the smallest \UCNFC{} correlation value, while setting $C_j = s^{-1}$ for all other layers, with $s=20$. We again modify \OptNameAcronym{} by dividing the gradients by $\epsilon \|\grad \Lo\|$ for $\epsilon = 0.01$. The learning rate is set to $0.05$ in \OptNameAcronym{} and $0.25$ for the standard training (gradient descent), and the networks are trained for 500 epochs. We sample $n=256$ points with $d=32$, and use width $k=256$.

\newpage
\section{Additional C-NFC/UC-NFC plots across architectures}
\label{app: C/UC-NFA plots}

\begin{figure*}[h]
    \centering
    \includegraphics[scale=0.6]{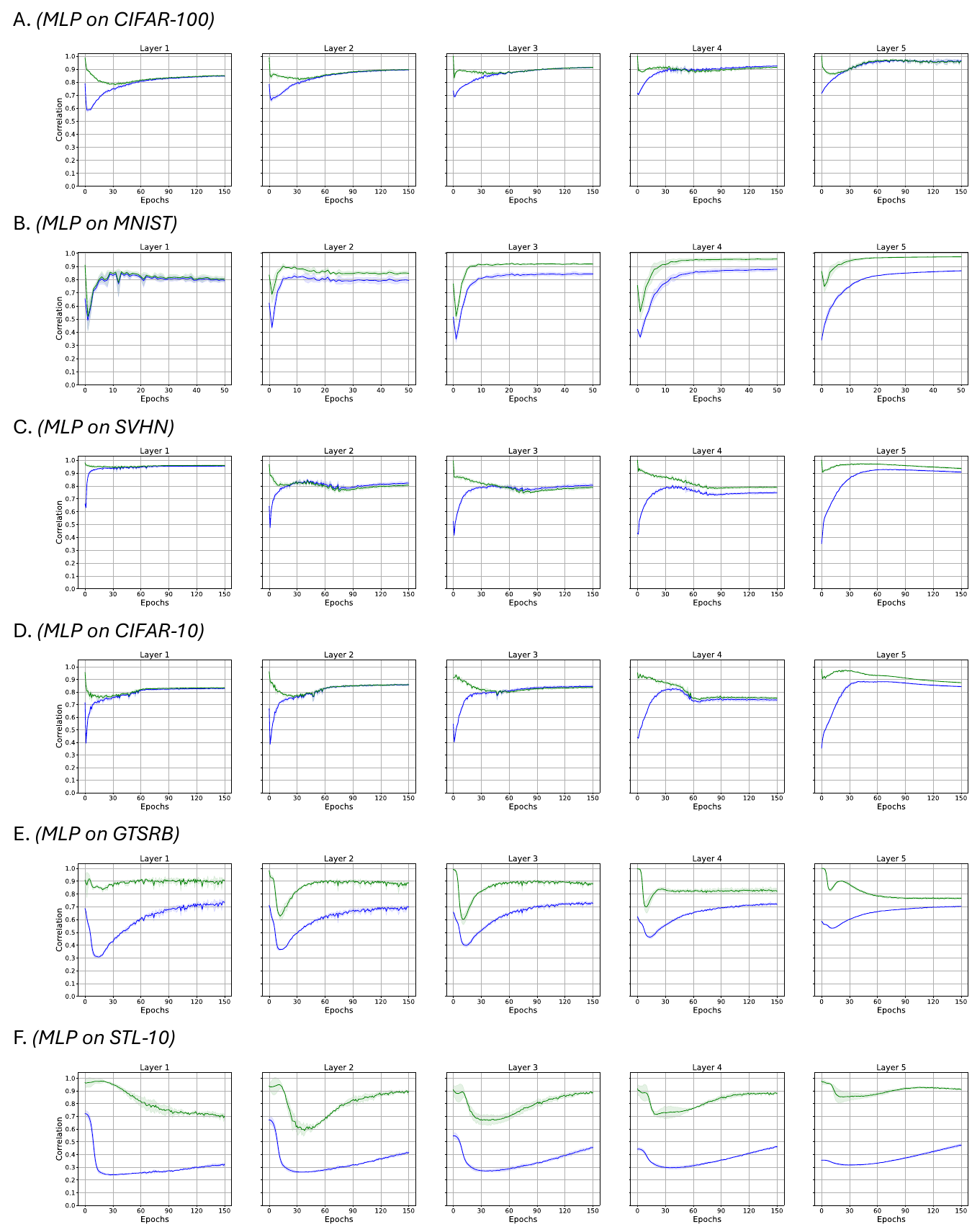}
    \caption{Full trajectories for the C/UC-NFC of a five hidden layer MLP trained on six datasets, averaged over three seeds, with large ($1.0$) initialization scale. The blue curves are the UC-NFC, while green curves are the C-NFC.}
    \label{fig: full NFA curves, MLP}
\end{figure*}

\begin{figure*}[h]
    \centering
    \includegraphics[scale=1.0]{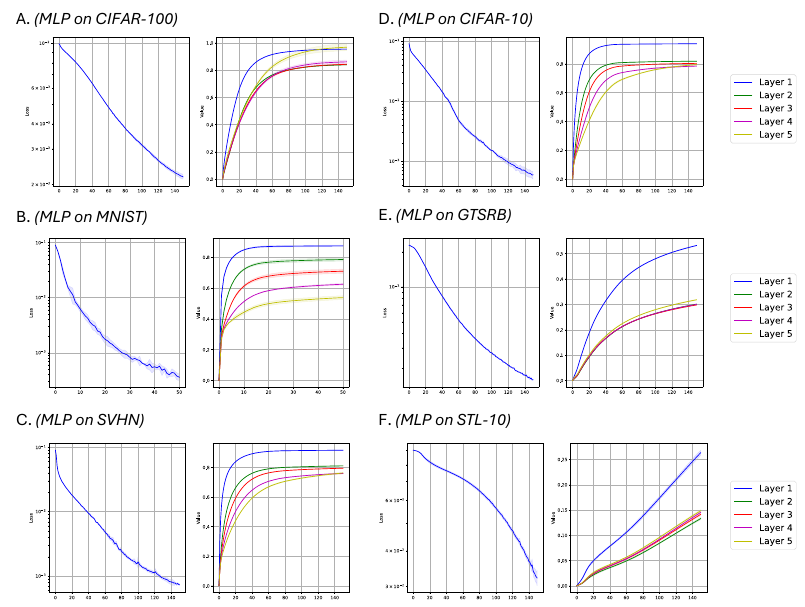}
    \caption{Losses and normalized changes in weights across datasets for a five hidden layer MLP. The change in weight is measured as $\|W - W_0\|\|W\|^{-1}$. First column of all subfigures are the losses, while the second columns are the weight changes.}
    \label{fig: losses, MLP}
\end{figure*}

\begin{figure*}[h]
    \centering
    \includegraphics[scale=0.625]{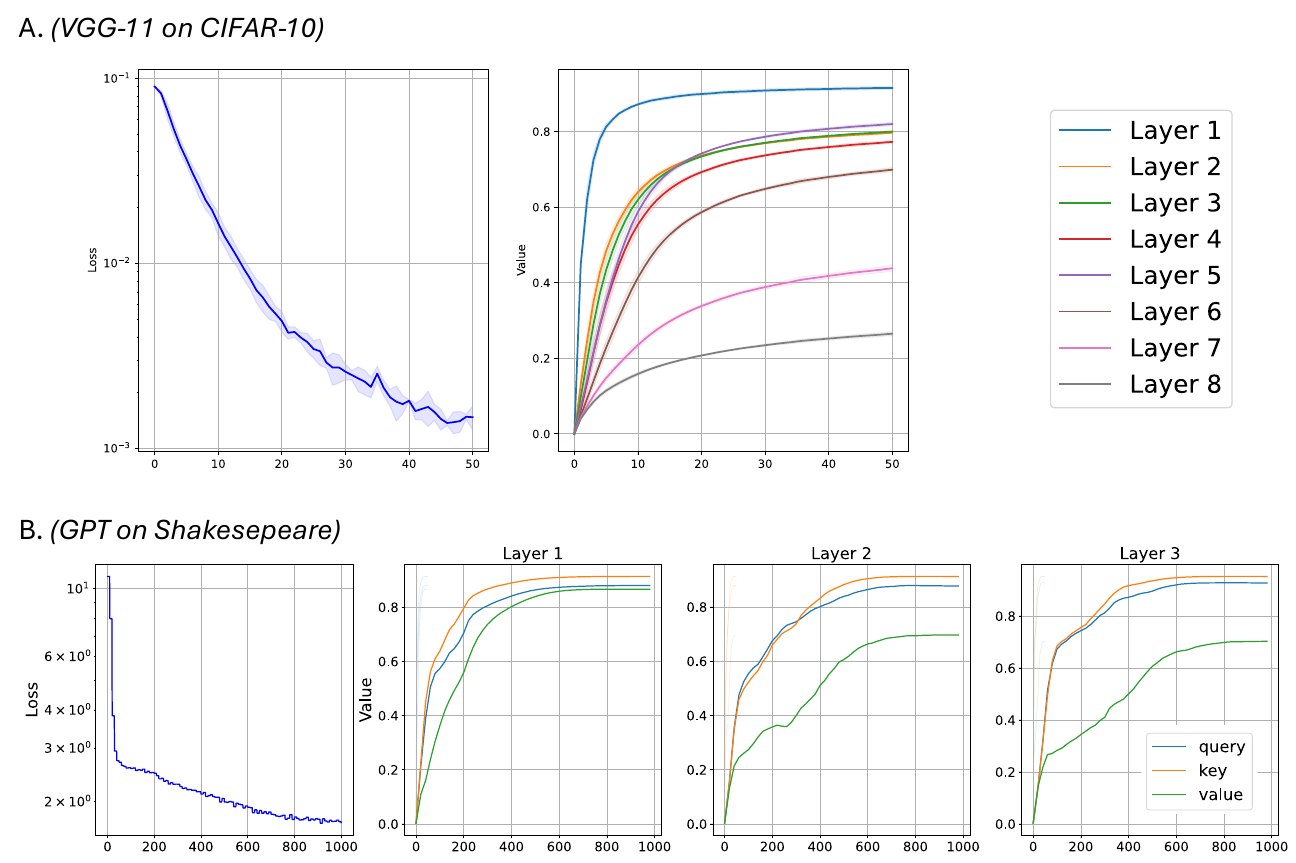}
    \caption{Losses and normalized changes in weights across datasets for VGG-11 on CIFAR-10 and GPT on Shakespeare character generation. The change in weight is measured as $\|W - W_0\|\|W\|^{-1}$.} First column of (A) and (B) are the losses, while the remaining columns are the weight changes.
    \label{fig: losses, VGG}
\end{figure*}

\newpage
\section{Additional experiments on real datasets}
We replicate Figures~\ref{fig: Corrupted Ansatz} and \ref{fig: NFA measurements, speed fixing} on celebrity faces (CelebA) and Street View House Numbers (SVHN). We begin by showing that one can disrupt the NFC correspondence by replacing the PTK feature covariance with a random matrix of the same spectral decay. For this example, we measure the Pearson correlation, which subtracts the mean of the image. I.e. $\bar{\rho}(A,B) \equiv \corr{A - m(A), B - m(B)}$, where $m(A), m(B)$ are the average of the elements of $A$ and $B$. 

\label{app: real datasets}
\begin{figure*}[h]
    \centering
    \includegraphics[scale=0.65]{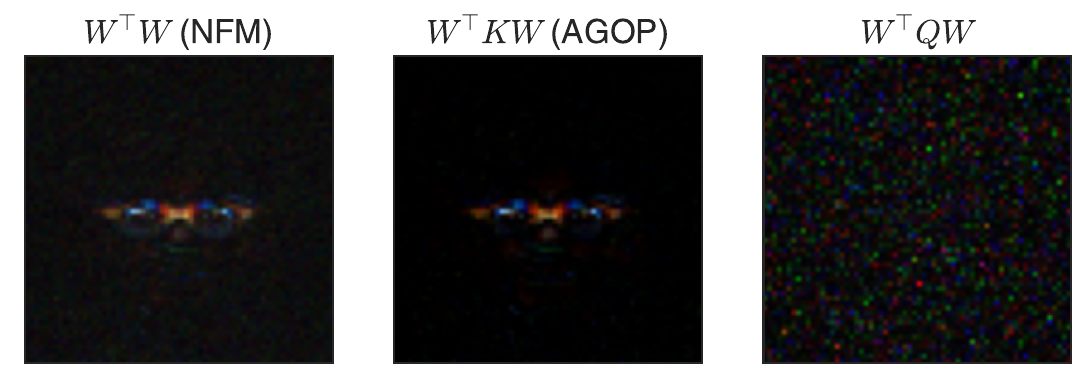}
    \caption{Various feature learning measures for the CelebA binary subtask of predicting glasses. The diagonals of the NFM
    $\round{W\tran W}$ (first plot) and AGOP $\round{W\tran K W}$
    (second plot) of a fully-connected network are similar to each other. Replacing $K$ with a symmetric matrix $Q$ with
    the same spectrum but independent eigenvectors obscures the
    low rank structure (third plot), and reduces the Pearson correlation of the diagonal from
    $\Pcorr{\diag{\NFM}, \diag{\AGOP}} = 0.91$ to $\Pcorr{\diag{\NFM}, \diag{W\tran Q W}} = 0.04$.}
  \label{fig: Corrupted Ansatz, CelebA}
\end{figure*}

\begin{figure*}[h]
    \centering
    \subfloat[][Feature learning measurements.]{\includegraphics[scale=0.5]{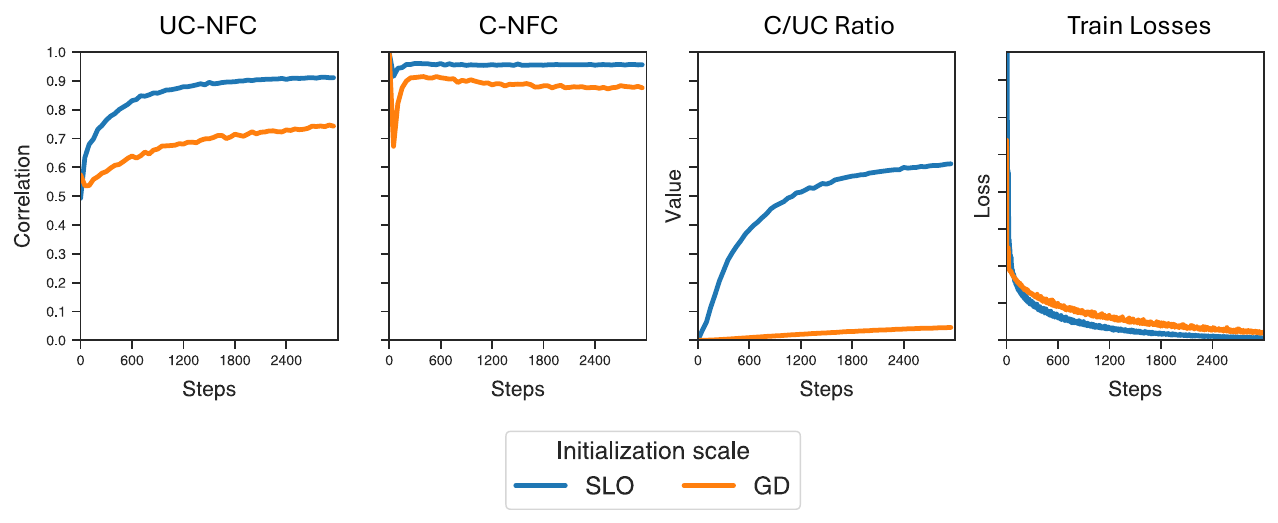}}\\
    \subfloat[][Standard Training]{\includegraphics[scale=0.55]{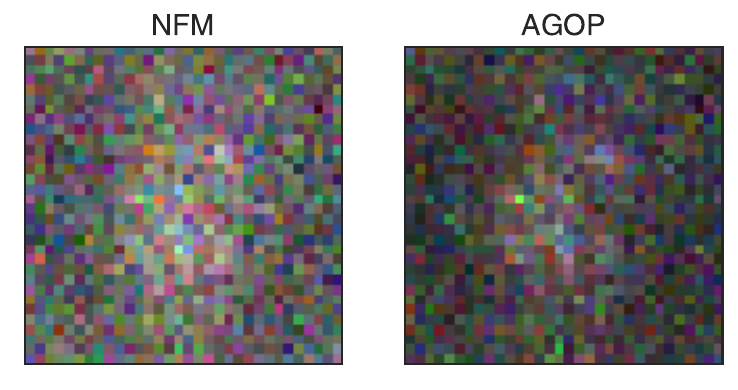}}
    \qquad 
    \subfloat[][\OptNameAcronym{}]{\includegraphics[scale=0.55]{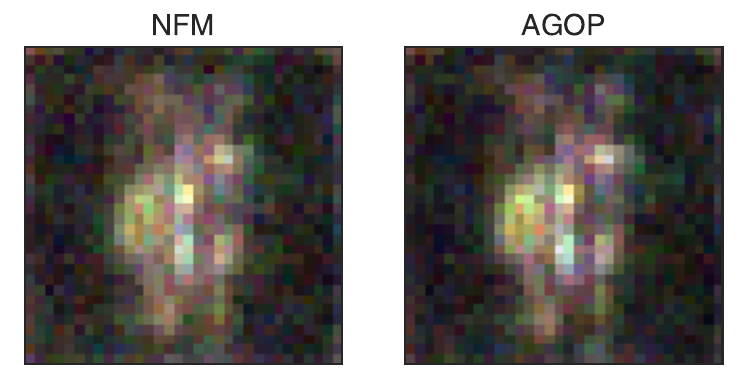}}
    \caption{We demonstrate on the SVHN dataset, with a 4 hidden layer neural network with large initialization scale, how \OptNameAcronym{} can improve the strength of the \UCNFC{}, the \CNFC{}, the ratio of the unnormalized \CNFC{} to \UCNFC{} (plot (a)) and the feature quality (plots (b) and (c)). In plots (b) and (c), we visualize the diagonal of the NFM and AGOP for the first layer of the trained network, where \OptNameAcronym{} was applied with $C_0=2.5$, $C_1=C_2=0.4$.}
  \label{fig: SLO experiments, SVHN}
\end{figure*}

\end{document}